\documentclass{article}

\PassOptionsToPackage{sort,compress}{natbib}
\usepackage[preprint]{neurips_2026}

\usepackage[utf8]{inputenc}
\usepackage[T1]{fontenc}
\usepackage{amsmath,amssymb,amsthm}
\usepackage{amsfonts}
\usepackage{mathtools}
\usepackage{booktabs}
\usepackage{algorithm}
\usepackage{algorithmic}
\usepackage{hyperref}
\usepackage{cleveref}
\usepackage{url}
\usepackage{nicefrac}
\usepackage{microtype}
\usepackage{enumitem}
\usepackage{xcolor}
\usepackage{thmtools}
\usepackage{tikz}
\usetikzlibrary{positioning,arrows.meta,calc}
\usetikzlibrary{positioning,arrows.meta}

\newtheorem{theorem}{Theorem}[section]
\newtheorem{proposition}[theorem]{Proposition}
\newtheorem{corollary}[theorem]{Corollary}
\newtheorem{lemma}[theorem]{Lemma}

\newtheorem{assumption}[theorem]{Assumption}

\title{\textbf{Who Trains Matters: Federated Learning under Enrollment and Participation Selection Biases}}

\author{
Gota Morishita\\
Centre for Brain, Mind and Markets\\
University of Melbourne\\
Parkville, Victoria, Australia\\
\texttt{gota.morishita@gmail.com}
}

\begin{document}

\maketitle


\begin{abstract}
Federated learning (FL) trains a shared model from updates contributed by distributed clients, often implicitly assuming that contributing clients are representative of the target population.
In practice, this representativeness assumption can fail at two distinct stages, inducing selection bias.
First, eligibility rules such as device constraints, software requirements, or user consent determine which clients are ever enrolled and reachable for training, inducing \emph{enrollment bias}.
Second, among enrolled clients, user and system factors such as battery state, network status, and local time determine which clients participate in each communication round, inducing \emph{participation bias}.
Although existing work has largely addressed round-level participation bias, it has paid far less attention to population-level enrollment bias, which can induce a persistent mismatch between the training objective and the target-population objective.
We formalize FL under a two-stage selection model and derive \textsc{FedIPW}, an inverse-probability-weighted aggregation scheme that recovers the target-population mean update under standard ignorability and positivity assumptions.
Because client-level covariates are often unavailable for non-enrolled clients, we also introduce a limited-information aggregate-calibration extension that uses known target-population summaries to reweight the enrolled sample, partially correcting enrollment bias.
We further provide an algorithm-agnostic optimization analysis under residual weighting error and show that incomplete selection correction can induce a non-vanishing bias floor.
Finally, experiments on synthetic federated logistic regression validate the predicted objective mismatch and show that enrollment correction reduces target-population error under two-stage selection.
\end{abstract}

\section{Introduction}
\label{sec:intro}
Federated learning (FL) trains a shared model over a large population of edge devices, such as mobile phones, without requiring raw data to leave each device \citep{kairouz2021advances,konecny2016federated,mcmahan2017communication}.
In a typical FL system, the server broadcasts the current model to a subset of clients in each round and aggregates the updates they return.
Standard algorithms such as FedAvg implicitly assume that the clients participating in each round are representative of the target population that the trained model is intended to serve \citep{mcmahan2017communication}.

In practice, however, the clients participating in FL are rarely a representative sample of the target population \citep{benarba2025bias}.
This matters because the server aggregates only the updates it observes.
When client selection is systematically related to client characteristics, local data distributions, or local gradients, naive aggregation can target a selected-client objective rather than the intended target-population objective.

This selection bias can arise at two distinct stages.
First, eligibility rules such as device constraints, software requirements, or user consent determine which clients are ever enrolled and reachable for training, inducing \emph{enrollment bias}.
Second, among enrolled clients, user and system factors such as battery state, network status, and local time determine which clients participate in each communication round, inducing \emph{participation bias}.

Although existing work has largely addressed round-level participation bias \citep{cho2022biased,sun2025debiasing}, it has paid far less attention to population-level enrollment bias.
This gap is important because enrollment bias can induce a persistent mismatch between the training objective and the target-population objective.
In particular, even if an algorithm perfectly corrects round-level participation bias, it may still optimize the wrong objective when the enrolled client pool is not representative of the target population.

To address this issue, we study how to align federated updates with a target-population objective under two-stage selection.
Our contributions are as follows.
\begin{enumerate}[leftmargin=1.5em]
    \item \textbf{Two-stage selection framework and FedIPW.}
    We formalize FL under an enrollment--participation selection model and derive \textsc{FedIPW}, an inverse-probability-weighted variant of FedAvg.
    Under the two-stage factorization together with ignorability and positivity assumptions, the weighted update is unbiased for the target-population mean update when inclusion probabilities are known, and asymptotically unbiased with consistent plug-in propensity estimates.
    
    \item \textbf{Limited-information enrollment correction.}
    Because client-level covariates are often unavailable for non-enrolled clients, we discuss a limited-information regime in which only aggregate target-population summaries are available.
    We introduce an aggregate-calibration extension that uses these summaries to reweight the enrolled sample and partially mitigate enrollment bias.

    \item \textbf{Optimization error under residual weighting errors.}
    We provide an algorithm-agnostic analysis of FL optimization under residual weighting error, covering both imperfectly estimated inclusion probabilities and structural error from omitting part of the selection mechanism.
    The resulting bound decomposes the error into a transient term, a variance term, and a non-vanishing \emph{bias floor}.
    This floor scales with the residual weighting error and with gradient heterogeneity, showing that incomplete selection correction can induce persistent target-population error.
\end{enumerate}

Empirically, we use synthetic federated logistic regression as a controlled testbed in which the target-population optimum can be computed accurately.
This allows us to measure objective mismatch directly.
The experiments show that two-stage selection can cause naive aggregation to converge to a selected-client solution, that \textsc{FedIPW} recovers the target-population objective by correcting the full enrollment--participation inclusion probability, and that aggregate calibration provides a useful partial correction when only population-level summaries are available.

\section{Related Work}

\paragraph{Selection and participation in federated learning.}
Partial participation is a defining feature of practical FL.
FedAvg and related early formulations allow decentralized, unbalanced, and non-IID data, but largely treat the participating clients in each round as exogenous \citep{mcmahan2017communication}.
Subsequent work studies biased client selection, systems heterogeneity, arbitrary participation, and correlated participation, including debiasing methods for round-level participation \citep{cho2021client,cho2022biased,ruan2021flexible,wang2022unified,sun2025debiasing}.
These works primarily address which enrolled clients participate in a round.
Our focus is complementary: we distinguish this round-level participation stage from the earlier enrollment stage that determines which clients are reachable for training at all.

\paragraph{Optimization under heterogeneous local training.}
A related but distinct line of work studies optimization bias caused by data heterogeneity, systems heterogeneity, and unequal local work, often discussed as \emph{client drift} or \emph{objective inconsistency}.
Representative approaches include proximal stabilization such as FedProx \citep{li2020fedprox}, control-variate correction such as \textsc{SCAFFOLD} \citep{karimireddy2020scaffold}, and normalized aggregation such as FedNova \citep{wang2020fednova}.
These methods reduce optimization error caused by local updates on non-IID data or heterogeneous computation.
Our focus is different: we address selection bias in \emph{which clients are represented in training}.
Because our correction operates through aggregation weights, it is complementary to drift- and variance-correction methods and can in principle be combined with them.

\paragraph{IPW, calibration, and closest work.}
Methodologically, our approach is connected to inverse-probability weighting and calibration methods for non-representative samples, including the Horvitz--Thompson estimator \citep{horvitz1952generalization}, model-assisted survey sampling \citep{sarndal1992model}, and calibration weighting using auxiliary population information \citep{deville1992calibration}.

Closest to our setting is \citet{goetze2025floss}, which treats opt-out and straggler effects as missing data and applies IPW to mitigate selection bias in FL.
Although this work recognizes both opt-out/enrollment-related selection and straggler/participation-related selection, it does not explicitly formalize them as a sequential two-stage selection process.
This distinction is important because the two stages occur in temporal order and may depend on different information: enrollment should be modeled using pre-enrollment variables, whereas round-level participation may additionally depend on post-enrollment, pre-round covariates.

Our work differs in three respects.
First, we formalize FL under a two-stage enrollment--participation model and estimate the two propensity components separately, reducing structural misspecification from collapsing temporally distinct mechanisms into a single inclusion model.
Second, we analyze the optimization effect of residual weighting error, showing that omitted or misspecified selection mechanisms can induce a persistent bias floor.
Third, we discuss an aggregate-calibration extension for the limited-information regime where client-level covariates are unavailable for non-enrolled users but target-population summaries are available.

\section{Problem Formulation and Objective Mismatch}
\label{sec:problem}
Let $\mathcal P=\{1,\dots,N\}$ denote the target population of clients.  
Each client $i\in\mathcal P$ has local loss $f_i(\theta)$, where $\theta\in\mathbb R^m$ is the global model parameter.  
The target population objective is
\begin{equation}
F(\theta):=\frac{1}{N}\sum_{i=1}^N f_i(\theta).
\end{equation}

\paragraph{Target-population mean update.}
At communication round $r\in\{1,\dots,R\}$, client $i$ returns a model update $\Delta_{i,r}$.  
For local SGD with $K$ steps and local step size $\eta_r>0$, we write
\begin{equation}
\Delta_{i,r}
:=
-\eta_r\sum_{s=0}^{K-1}\mathbf g_{i,r}^{(s)},
\end{equation}
where $s$ indexes local steps and $\mathbf g_{i,r}^{(s)}$ denotes the local stochastic gradient evaluated along the local trajectory.  
The target-population mean update at round $r$ is
\begin{equation}
\bar\Delta_r:= E\bigg[\frac{1}{N}\sum_{i=1}^N \Delta_{i,r}\bigg].
\end{equation}
If the server could aggregate updates from a representative set of clients, the resulting update would track $\bar\Delta_r$. In practice, however, client participation is systematically non-random \citep{cho2022biased, benarba2025bias}. This creates a mismatch between the update induced by observed participants and the target-population mean update. To make this mismatch explicit, we model client inclusion as a two-stage selection process.

\paragraph{Two-stage selection and inclusion factorization.}

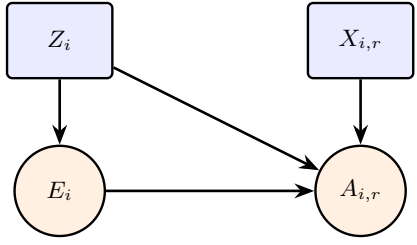
\begin{figure}[t]
\centering
\begin{tikzpicture}[
    >=Stealth,
    causal/.style={->, line width=1pt},
    obs/.style={
      draw, rounded corners=2pt,
      minimum width=14mm, minimum height=10mm,
      fill=blue!8, line width=0.9pt
    },
    sel/.style={
      draw, circle,
      minimum size=12mm,
      fill=orange!12, line width=0.9pt
    },
    font=\small
]

\node[obs] (Z) at (0,2) {$Z_i$};
\node[obs] (X) at (4,2) {$X_{i,r}$};

\node[sel] (E) at (0,0) {$E_i$};
\node[sel] (A) at (4,0) {$A_{i,r}$};

\draw[causal] (Z) -- (E);
\draw[causal] (Z) -- (A);
\draw[causal] (X) -- (A);
\draw[causal] (E) -- (A);

\end{tikzpicture}
\caption{
Causal DAG for two-stage client selection in federated learning.
Pre-enrollment attributes $Z_i$ affect enrollment $E_i$ and may also affect round-level participation $A_{i,r}$.
Pre-round covariates $X_{i,r}$ affect participation, and enrollment $E_i$ causally precedes participation.
The graph encodes the exclusion restriction that, conditional on $Z_i$, the later pre-round covariates $X_{i,r}$ do not directly affect enrollment.
}
\label{fig:two_stage_selection_dag}
\end{figure}
We distinguish between \emph{enrollment} and \emph{enrollment-conditional participation}.
Enrollment determines whether client $i$ is ever reachable for training:
\begin{equation}
E_i\in\{0,1\},
\qquad
\pi_i^{\mathrm{enroll}}:=P(E_i=1\mid Z_i),
\end{equation}
where $Z_i$ denotes pre-enrollment covariates, such as device type, OS version, and region.

Conditional on being enrolled, participation determines whether client $i$ contributes at round $r$:
\begin{equation}
A_{i,r}\in\{0,1\},
\qquad
\pi_{i,r}^{\mathrm{part}}:=P(A_{i,r}=1\mid E_i=1,Z_i,X_{i,r}),
\end{equation}
where $X_{i,r}$ denotes pre-round covariates, such as system status, battery status, local time, recent participation rate, recent connectivity status, and device-state summaries up to round $r-1$.
We take $A_{i,r}=0$ whenever $E_i=0$, so $A_{i,r}=1$ denotes overall round-$r$ inclusion in the observed aggregate.

Figure~\ref{fig:two_stage_selection_dag} summarizes the assumed two-stage causal structure.
The graph encodes the assumption that enrollment is governed by $Z_i$, while round-level participation is governed by enrollment, $Z_i$, and $X_{i,r}$:
\[
Z_i \to E_i,\quad Z_i \to A_{i,r},\quad X_{i,r}\to A_{i,r},\quad E_i\to A_{i,r}.
\]
This structure includes the exclusion restriction
\[
P(E_i=1\mid Z_i,X_{i,r})=P(E_i=1\mid Z_i),
\]
meaning that, after conditioning on pre-enrollment covariates $Z_i$, the later pre-round covariates $X_{i,r}$ do not directly affect enrollment.
Together with the fact that participation cannot occur without enrollment, the overall inclusion probability at round $r$ satisfies
\begin{align}
p_{i,r}
&:=P(A_{i,r}=1\mid Z_i,X_{i,r}) \\
&=\sum_{e\in\{0,1\}} P(E_i=e\mid Z_i,X_{i,r})
P(A_{i,r}=1\mid E_i=e,Z_i,X_{i,r}) \\
&=P(E_i=1\mid Z_i)
P(A_{i,r}=1\mid E_i=1,Z_i,X_{i,r}) \\
&=\pi_i^{\mathrm{enroll}}\pi_{i,r}^{\mathrm{part}}.
\end{align}
Thus, the overall inclusion probability factorizes into an enrollment probability and an enrollment-conditional participation probability.
This decomposition motivates modeling the two stages separately.

\paragraph{Inclusion-induced bias of naive aggregation.}
Let
\[
\mathcal S_r := \{i : A_{i,r}=1\}
\]
denote the set of clients included at round $r$. The naive server aggregate assigns equal weight to all observed clients:
\begin{equation}
\hat\Delta_r^{\mathrm{naive}}
:=
\frac{1}{|\mathcal S_r|}\sum_{i\in \mathcal S_r}\Delta_{i,r}.
\end{equation}
When inclusion depends on client characteristics through the covariate-conditional inclusion probability $p_{i,r}$, the observed sample is generally not representative of the target population. Consequently, $\hat\Delta_r^{\mathrm{naive}}$ is generally not centered on the target-population mean update $\bar\Delta_r$. In other words, selective inclusion induces an objective mismatch: the update produced by naive aggregation targets the distribution of observed participants rather than the target population $\mathcal P$.

\section{Debiasing Round-Level Updates Under Two-Stage Selection}
\label{sec:debiasing}

We now construct a weighted estimator of the target-population mean update.
Throughout this section, we assume the two-stage inclusion probability
\[
p_{i,r}=P(A_{i,r}=1\mid Z_i,X_{i,r})
=\pi_i^{\mathrm{enroll}}\pi_{i,r}^{\mathrm{part}}
\]
from Section~\ref{sec:problem}.
We also assume mean ignorability and positivity:
\[
E[\Delta_{i,r}\mid Z_i,X_{i,r},A_{i,r}=1]
=
E[\Delta_{i,r}\mid Z_i,X_{i,r}],
\qquad
p_{i,r}>0.
\]
That is, after conditioning on the observed covariates used in the selection model, inclusion does not change the conditional mean update, and every relevant client type has positive probability of being observed.

If the inclusion probabilities were known, the natural inverse-probability-weighted estimator is
\begin{equation}
\hat{\bar\Delta}_r^{\mathrm{IPW}}
:=
\frac{1}{N}\sum_{i=1}^N
\frac{A_{i,r}}{p_{i,r}}\Delta_{i,r}
=
\frac{1}{N}\sum_{i:A_{i,r}=1}
\frac{\Delta_{i,r}}{p_{i,r}} .
\label{eq:oracle-ipw}
\end{equation}

\begin{proposition}[Oracle IPW update]
\label{prop:oracle-ipw-unbiasedness}
Under two-stage factorization, mean ignorability, and positivity,
\[
E[\hat{\bar\Delta}_r^{\mathrm{IPW}}]=\bar\Delta_r .
\]
\end{proposition}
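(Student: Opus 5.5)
The approach is the standard Horvitz--Thompson argument via iterated expectations, applied client by client. By linearity of expectation,
\[
E[\hat{\bar\Delta}_r^{\mathrm{IPW}}]=\frac{1}{N}\sum_{i=1}^N E\Big[\frac{A_{i,r}}{p_{i,r}}\Delta_{i,r}\Big],
\]
so it suffices to show $E[A_{i,r}\Delta_{i,r}/p_{i,r}]=E[\Delta_{i,r}]$ for each fixed $i$. Summing and dividing by $N$ then gives $\frac1N\sum_i E[\Delta_{i,r}]=E[\frac1N\sum_i\Delta_{i,r}]=\bar\Delta_r$, by the definition of $\bar\Delta_r$ in Section~\ref{sec:problem}.

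For a fixed client, condition on the selection covariates $(Z_i,X_{i,r})$. Since $p_{i,r}$ is a function of $(Z_i,X_{i,r})$ alone, positivity makes $1/p_{i,r}$ well defined, and we can pull it out:
\[
E\Big[\frac{A_{i,r}\Delta_{i,r}}{p_{i,r}}\,\Big|\,Z_i,X_{i,r}\Big]=\frac{1}{p_{i,r}}E[A_{i,r}\Delta_{i,r}\mid Z_i,X_{i,r}].
\]
Because $A_{i,r}$ is binary, the inner term is
\[
P(A_{i,r}=1\mid Z_i,X_{i,r})\,E[\Delta_{i,r}\mid Z_i,X_{i,r},A_{i,r}=1]=p_{i,r}\,E[\Delta_{i,r}\mid Z_i,X_{i,r}],
\]
where the equality uses mean ignorability. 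The factor $p_{i,r}$ cancels, and the tower property gives $E[A_{i,r}\Delta_{i,r}/p_{i,r}]=E[\Delta_{i,r}]$.

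The two-stage factorization $p_{i,r}=\pi_i^{\mathrm{enroll}}\pi_{i,r}^{\mathrm{part}}$ is used only to identify the correct weight. The quantity that appears in $P(A_{i,r}=1\mid Z_i,X_{i,r})$ is the overall inclusion probability, and the factorization (via the exclusion restriction and $A_{i,r}=0$ when $E_i=0$) shows this equals the product of the two stage-wise propensities. The estimator in \eqref{eq:oracle-ipw} therefore uses exactly the right denominator.

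The main subtlety is that $\Delta_{i,r}$ depends on the broadcast model $\theta_r$, which is itself random through past selections. To handle this, I would interpret all conditional expectations as also conditioning on the history $\mathcal F_{r-1}$ up to round $r$. I would treat $X_{i,r}$ as including the relevant history, consistent with its description as summaries up to round $r-1$, so that ignorability holds conditionally on it. The argument above then goes through conditionally, and a final tower step over the history recovers the unconditional statement. Integrability of $\Delta_{i,r}/p_{i,r}$ is implicitly assumed.
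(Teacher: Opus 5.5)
Your proof is correct and follows essentially the paper's argument: condition on $W_{i,r}=(Z_i,X_{i,r})$, use $E[A_{i,r}\Delta_{i,r}\mid W_{i,r}]=p_{i,r}E[\Delta_{i,r}\mid W_{i,r},A_{i,r}=1]$, apply mean ignorability, then the tower property, and average over clients. Your extra conditioning on the history is not needed. Ignorability is assumed directly given $(Z_i,X_{i,r})$, and the tower step holds however $\Delta_{i,r}$ depends on $\theta_r$.
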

Thus, inverse-probability weighting aligns the expected aggregate update with the target-population update rather than with the selected participant distribution.
The proof is given in Appendix~\ref{app:proof-ipw-unbiasedness}.

\paragraph{Estimating the two-stage inclusion probability.}
In practice, the inclusion probability $p_{i,r}$ is unknown.
Under the two-stage factorization,
\[
p_{i,r}
=
\pi_i^{\mathrm{enroll}}\pi_{i,r}^{\mathrm{part}},
\]
we estimate the two propensity components separately.
The enrollment propensity
\[
\pi_i^{\mathrm{enroll}}=P(E_i=1\mid Z_i)
\]
is estimated from pre-enrollment covariates $Z_i$ and enrollment indicators $E_i$, when such target-population information is available.
The enrollment-conditional participation propensity
\[
\pi_{i,r}^{\mathrm{part}}
=
P(A_{i,r}=1\mid E_i=1,Z_i,X_{i,r})
\]
is estimated among enrolled clients using pre-round covariates $X_{i,r}$ and round-level participation indicators $A_{i,r}$.
For example, either component may be estimated using logistic regression or more flexible classifiers, depending on the complexity of the selection mechanism \citep{cole2008constructing}.
This gives the plug-in inclusion probability
\[
\hat p_{i,r}
=
\hat\pi_i^{\mathrm{enroll}}\hat\pi_{i,r}^{\mathrm{part}}.
\]
Estimating $\pi_i^{\mathrm{enroll}}$ and $\pi_{i,r}^{\mathrm{part}}$ separately is also important for avoiding selection-model misspecification.
Enrollment is a pre-round selection event and should be modeled using only pre-enrollment information, whereas round-level participation may depend on post-enrollment, pre-round covariates.
Collapsing the two stages into a single inclusion model can obscure this temporal order; in particular, it may use later availability variables to explain earlier enrollment decisions, fitting correlations in the observed data rather than the sequential mechanism that generated inclusion.
This form of model misspecification induces residual weighting error, which is precisely the type of error captured by $\epsilon_w$ in our optimization analysis \citep{rosenbaum1983central,robins1986new,hernan2020causal}.

\paragraph{FedIPW.}
We propose \textsc{FedIPW}, a two-stage inverse-probability-weighted FL aggregation procedure.
Before communication rounds begin, FedIPW estimates the enrollment propensity $\hat\pi_i^{\mathrm{enroll}}$ from pre-enrollment covariates and enrollment indicators, when such target-population information is available.
At each communication round $r$, FedIPW estimates the enrollment-conditional participation propensity $\hat\pi_{i,r}^{\mathrm{part}}$ among enrolled clients using pre-round covariates and round-level participation indicators.
It then forms the plug-in inclusion probability
\[
\hat p_{i,r}
=
\hat\pi_i^{\mathrm{enroll}}\hat\pi_{i,r}^{\mathrm{part}},
\]
and reweights each returned client update by the inverse of this probability.
The resulting aggregate is
\begin{equation}
\hat{\bar\Delta}_r^{\mathrm{FedIPW}}
=
\frac{1}{N}\sum_{i:A_{i,r}=1}
\frac{\Delta_{i,r}}{\hat p_{i,r}}.
\label{eq:fedipw-update}
\end{equation}

The next Proposition~\ref{prop:fedipw-plugin} shows that FedIPW remains asymptotically centered on the target-population mean update when the two propensity components are consistently estimated.

\begin{proposition}[plug-in IPW update]
\label{prop:fedipw-plugin}
Fix a communication round $r$.
Suppose the enrollment and participation propensity estimators are uniformly consistent in their estimation samples, the true and estimated inclusion probabilities are bounded away from zero, and the observed updates satisfy a bounded moment condition.
Then
\[
E[\hat{\bar\Delta}_r^{\mathrm{FedIPW}}]-\bar\Delta_r \to 0 .
\]
\end{proposition}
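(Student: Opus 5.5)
The plan is to reduce the claim to Proposition~\ref{prop:oracle-ipw-unbiasedness} by decomposing the plug-in aggregate into the oracle aggregate plus a remainder. Writing
\[
\hat{\bar\Delta}_r^{\mathrm{FedIPW}}
=
\hat{\bar\Delta}_r^{\mathrm{IPW}}
+
\frac{1}{N}\sum_{i=1}^N A_{i,r}\Delta_{i,r}\Bigl(\frac{1}{\hat p_{i,r}}-\frac{1}{p_{i,r}}\Bigr),
\]
and taking expectations, Proposition~\ref{prop:oracle-ipw-unbiasedness} gives $E[\hat{\bar\Delta}_r^{\mathrm{IPW}}]=\bar\Delta_r$. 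The statement therefore reduces to showing that the expectation of the remainder term $R_r$ tends to zero as the estimation samples grow.

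\textbf{Controlling the weight error.} Let $p_{i,r}\ge c>0$ and $\hat p_{i,r}\ge c>0$. Then
\[
\Bigl|\frac{1}{\hat p_{i,r}}-\frac{1}{p_{i,r}}\Bigr|
=\frac{|p_{i,r}-\hat p_{i,r}|}{\hat p_{i,r}p_{i,r}}
\le c^{-2}|\hat p_{i,r}-p_{i,r}|.
\]
Using the factorization, write
\[
\hat p_{i,r}-p_{i,r}=(\hat\pi_i^{\mathrm{enroll}}-\pi_i^{\mathrm{enroll}})\hat\pi_{i,r}^{\mathrm{part}}+\pi_i^{\mathrm{enroll}}(\hat\pi_{i,r}^{\mathrm{part}}-\pi_{i,r}^{\mathrm{part}}).
\]
Since all propensities lie in $[0,1]$, this is bounded by $\delta_n:=\sup_i|\hat\pi_i^{\mathrm{enroll}}-\pi_i^{\mathrm{enroll}}|+\sup_{i}|\hat\pi_{i,r}^{\mathrm{part}}-\pi_{i,r}^{\mathrm{part}}|$. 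By uniform consistency, $\delta_n\to0$ in probability. Because both weights are also bounded by $c^{-1}$, the weight error satisfies $\delta_n\le 2$ and is deterministically bounded by $2c^{-1}$.

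\textbf{Passing to expectation.} Bound
\[
\|R_r\|\le c^{-2}\,\min(\delta_n,2)\,\frac{1}{N}\sum_i A_{i,r}\|\Delta_{i,r}\|.
\]
By Cauchy--Schwarz,
\[
\|E R_r\|\le c^{-2}\bigl(E[\min(\delta_n,2)^2]\bigr)^{1/2}\Bigl(E\Bigl[\Bigl(\tfrac1N\sum_i A_{i,r}\|\Delta_{i,r}\|\Bigr)^2\Bigr]\Bigr)^{1/2}.
\]
The second factor is finite by the bounded moment condition, interpreted as $\sup_i E\|\Delta_{i,r}\|^2<\infty$, using Jensen on the average. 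For the first factor, $\min(\delta_n,2)^2$ is bounded and converges to zero in probability, so its expectation tends to zero by bounded convergence. Hence $E R_r\to0$, which proves the claim.

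\textbf{Main obstacle.} The delicate step is moving from convergence in probability of the estimated propensities to convergence of the expectation of the aggregate. The estimated weights $\hat p_{i,r}$ are data-dependent and may be correlated with the updates $\Delta_{i,r}$, for example when the participation model is fit on the same round's indicators. This means ignorability cannot be applied to the product directly. The Cauchy--Schwarz step is chosen precisely so that no independence between the weights and the updates is needed, provided two conditions hold. First, the "bounded away from zero" assumption must hold almost surely for the estimator. If it holds only with probability tending to one, the plan is to split on the event $\{\hat p_{i,r}\ge c\}$ and handle the complement using a clipping convention. Second, the moment condition must be at least a second moment. With only a $1+\alpha$ moment, Hölder's inequality with exponents $(1+\alpha)/\alpha$ and $1+\alpha$ replaces Cauchy--Schwarz, and the rest of the argument is unchanged.
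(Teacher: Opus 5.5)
Your proposal is correct and follows essentially the same route as the paper. Both arguments decompose the plug-in aggregate into the oracle IPW estimator plus a remainder. Both bound $|\hat p_{i,r}-p_{i,r}|$ through the product factorization and use $|1/\hat p_{i,r}-1/p_{i,r}|\le c^{-2}|\hat p_{i,r}-p_{i,r}|$. Both then finish with Cauchy--Schwarz against the second moment of $\frac{1}{N}\sum_i A_{i,r}\|\Delta_{i,r}\|_2$. The only difference is cosmetic: the paper assumes $E[\max_i|\hat\pi-\pi|]\to0$ and uses $e^2\le e$, whereas you assume convergence in probability and invoke bounded convergence. These are equivalent here because the propensity error is bounded.
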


The formal assumptions and proof are given in Appendix~\ref{app:proof-plugin-ipw}.

\section{Limited-Information Enrollment Correction}
\label{sec:limited-info-enrollment}

FedIPW requires individual-level covariates for estimating enrollment propensities over the target population.
In many deployments, non-enrolled clients are not individually observable, while aggregate target-population summaries may be available from external sources, product analytics, or census-like statistics.
We therefore consider an aggregate-calibration extension.

The idea is to reweight the enrolled clients so that functions of their pre-enrollment covariates match known summaries of the target population.
Let $b(Z_i)$ be a vector-valued function of the pre-enrollment covariates $Z_i$, such as indicators for region, device class, language, or OS version.
Suppose the corresponding target-population average
\[
\mu_b = E[b(Z_i)]
\]
is known.
We choose weights $q_i$ on enrolled clients so that
\[
\sum_{i\in\mathcal E} q_i=1,
\qquad
\sum_{i\in\mathcal E} q_i b(Z_i)=\mu_b.
\]
These weights make the enrolled sample match the target population along the selected summaries.
For example, if enrolled clients overrepresent newer devices or a particular region, calibration increases the weight of underrepresented enrolled clients until the weighted enrolled distribution matches the known target-population proportions.

This is a limited-information correction, not a full replacement for enrollment IPW.
If the chosen covariate summaries $b(Z)$ capture all enrollment-related variation in the mean update, then calibration recovers asymptotic unbiasedness; we state this formal condition in Appendix~\ref{app:aggregate-calibration}.
In typical ML applications, this sufficiency condition is unlikely to hold exactly.
Thus, aggregate calibration should be interpreted as an approximate bias-reduction method that corrects only the component of enrollment mismatch explained by the chosen covariate summaries.

\section{Optimization Error under Weight Misspecification}
\label{sec:convergence}

The previous sections show how FedIPW aligns FL updates with the target-population update when the two-stage inclusion probabilities are known or consistently estimated.
We analyze the optimization error under the residual weighting error caused either by finite-sample propensity estimation or by ignoring or only partially correcting a selection mechanism.

Let
\[
\rho_{i,r}:=\frac{p_{i,r}}{\widehat p_{i,r}}
\]
denote the ratio between the true inclusion probability and the probability used by the algorithm.
When $\rho_{i,r}=1$, the aggregation weight is correct; deviations from one measure residual weighting error.
We assume this error is uniformly bounded:
\[
|\rho_{i,r}-1|\le \epsilon_w .
\]

This ratio has two interpretations.
First, when the algorithm uses the correct two-stage form
\[
\widehat p_{i,r}
=
\widehat\pi_i^{\mathrm{enroll}}\widehat\pi_{i,r}^{\mathrm{part}},
\]
$\epsilon_w$ captures statistical error from estimating the two propensities.
Second, when the algorithm omits part of the selection mechanism, $\epsilon_w$ captures structural misspecification.
For example, a participation-only method uses
\[
\widehat p_{i,r}=\widehat\pi_{i,r}^{\mathrm{part}}.
\]
Even if $\widehat\pi_{i,r}^{\mathrm{part}}=\pi_{i,r}^{\mathrm{part}}$, the ratio becomes
\[
\rho_{i,r}=\pi_i^{\mathrm{enroll}},
\]
so enrollment heterogeneity remains as residual weight error.

We now summarize how residual weighting error affects optimization across many rounds.
We use a strongly convex setting because it lets us quantify the resulting target-population suboptimality gap.
For nonconvex objectives, one typically obtains stationarity bounds, which do not directly measure a persistent gap to the target-population optimum.

\begin{theorem}[Bias-floor decomposition; informal]
\label{thm:fedipw-bias-floor}
Consider the FL algorithm run for $R$ communication rounds, and let
\[
F^\star:=\min_{\theta}F(\theta),
\qquad
h_0:=F(\theta_0)-F^\star .
\]
Suppose the standard smoothness, strong-convexity, stochastic-gradient, overlap, and step-size conditions stated in Appendix~\ref{app:proof-conv} hold.
Suppose also that the residual weighting error is bounded with a sufficiently small error $\epsilon_w$:
\[
|\rho_{i,r}-1|\le \epsilon_w
\qquad
\text{for all clients } i \text{ and rounds } r .
\]
Then the last iterate $\theta_R$ satisfies
\[
E[F(\theta_R)-F^\star]
\le
\widetilde O\!\left(
\frac{V}{\mu R}
+
h_0\exp(-cR)
+
\frac{\epsilon_w^2G^2}{\mu}
\right),
\]
where $V$ is a variance constant, $G$ measures client-gradient heterogeneity, $\mu$ is the strong-convexity parameter, and $c>0$ is a problem-dependent constant.
\end{theorem}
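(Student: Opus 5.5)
The plan is to treat the weighted aggregate as a biased, noisy gradient oracle for $F$ and run a standard biased-SGD analysis for smooth, strongly convex objectives. Write the server step as $\theta_{r+1}=\theta_r+\hat{\bar\Delta}_r$, with $\hat{\bar\Delta}_r=\frac1N\sum_i \frac{A_{i,r}}{\widehat p_{i,r}}\Delta_{i,r}$. Condition on $Z_i,X_{i,r}$ and the history. By the mean-ignorability step used in the proof of Proposition~\ref{prop:oracle-ipw-unbiasedness}, $E[A_{i,r}\Delta_{i,r}/\widehat p_{i,r}\mid\cdot]=\rho_{i,r}E[\Delta_{i,r}\mid\cdot]$. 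Hence
\[
E[\hat{\bar\Delta}_r\mid\mathcal F_r]=\bar\Delta_r+\frac1N\sum_i(\rho_{i,r}-1)E[\Delta_{i,r}\mid\mathcal F_r].
\]
Next, put $\tilde\eta_r:=K\eta_r$ and write $\bar\Delta_r=-\tilde\eta_r(\nabla F(\theta_r)+d_r)$, where $d_r$ is the usual local-drift error. It is bounded by $O(\eta_r LK\cdot\text{heterogeneity})$ under the appendix smoothness and heterogeneity conditions.

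The bias term is then bounded by $\tilde\eta_r\,\epsilon_w\,\frac1N\sum_i\|\nabla f_i(\theta_r)+\text{drift}_i\|\le\tilde\eta_r\epsilon_w G$, up to drift corrections. Here $G$ bounds the client-gradient magnitudes or heterogeneity; a sharper version would use $\|\nabla f_i-\nabla F\|\le G$ plus a $\|\nabla F\|$ term that is absorbed for small $\epsilon_w$. The variance of the aggregate is bounded using overlap: $\widehat p_{i,r}\ge p_{\min}$ and $\rho_{i,r}\le 1+\epsilon_w$ give second moments of $A_{i,r}/\widehat p_{i,r}$ of order $1/p_{\min}$, which yields a constant $V$ collecting stochastic-gradient and sampling variance.

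The descent step applies $L$-smoothness:
\[
F(\theta_{r+1})\le F(\theta_r)-\tilde\eta_r\langle\nabla F,\nabla F+b_r\rangle+\tfrac{L}{2}\tilde\eta_r^2E\|\cdot\|^2,
\]
with $\|b_r\|\le\epsilon_wG+\|d_r\|$. Young's inequality gives $-\langle\nabla F,b_r\rangle\le\frac12\|\nabla F\|^2+\frac12\|b_r\|^2$, and PL (from strong convexity) gives $\|\nabla F\|^2\ge2\mu(F-F^\star)$. Together these produce the recursion
\[
h_{r+1}\le(1-\mu\tilde\eta_r/2)h_r+\tfrac{\tilde\eta_r}{2}\epsilon_w^2G^2+L\tilde\eta_r^2V',
\]
for $\tilde\eta_r\le 1/(2L)$ and $\epsilon_w$ small enough that the $\epsilon_w\|\nabla F\|$ cross terms are absorbed. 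Unrolling uses the standard constant-then-decaying stepsize schedule (Stich-type lemma). A constant phase gives $h_0\exp(-cR)$ with $c\propto\mu/L$, the decaying phase $\tilde\eta_r\asymp 1/(\mu r)$ gives $\widetilde O(V/(\mu R))$, and the bias term is a geometric sum $\sum\tilde\eta_r\epsilon_w^2G^2\prod(1-\mu\tilde\eta/2)\le\epsilon_w^2G^2/\mu$, which does not vanish. Drift contributions of order $\eta^2K^2$ are folded into the $\widetilde O(V/(\mu R))$ term under the step-size conditions.

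The main obstacle is the bias term. Because $\rho_{i,r}$ depends on the same covariates as the updates, the bias cannot be treated as mean-zero noise. The dependence on $\theta_r$ through $E[\Delta_{i,r}]$ also has to be bounded uniformly by $G$ rather than by $\|\nabla F\|$, or else absorbed using the small-$\epsilon_w$ assumption. I would first try the bounded-gradient-dissimilarity form $\frac1N\sum_i\|\nabla f_i\|^2\le G^2+B^2\|\nabla F\|^2$, together with Cauchy–Schwarz, $\|\frac1N\sum(\rho_i-1)g_i\|^2\le\epsilon_w^2\frac1N\sum\|g_i\|^2$. Requiring $\epsilon_w^2B^2\le1/4$ then lets the $\|\nabla F\|^2$ part be absorbed into the descent term, which is exactly where the condition that $\epsilon_w$ be sufficiently small enters.
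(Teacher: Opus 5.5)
Your proposal follows essentially the same route as the paper: the factorization $E_r[A_i g_i/\widehat p_i]=\rho_i E_r[\nabla f_i]$ splits the update into a mean path gradient plus the weighting bias $b_r$, which is bounded by Cauchy--Schwarz, $\|b_r\|^2\le\epsilon_w^2\frac{1}{KN}\sum_{i,k}\|\nabla f_i(y_{i,k-1})\|^2$, together with the heterogeneity bound and a local-drift lemma; smoothness, Young, and PL then give a contraction, with the $h_r$-terms absorbed under small $\epsilon_w$. There are two inessential differences: the paper writes heterogeneity as $G^2+2LB^2(F-F^\star)$, so its smallness condition is $\epsilon_w^2\le c\,\mu/(LB^2)$ rather than $\epsilon_w^2B^2\lesssim 1$, and it unrolls with a constant step and a horizon-dependent choice instead of a Stich-type schedule. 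One detail needs tightening: once you give up uniformly bounded gradients, the client-sampling variance is not a constant but is only bounded by a term of order $\frac{L\widetilde\eta^2}{Np_{\min}}\cdot\frac{1}{N}\sum_i\|\nabla f_i(\theta_r)\|^2$, whose $B^2$-part must also be absorbed into the descent term, and this requires a step-size restriction of the form $\widetilde\eta L\lesssim Np_{\min}/B^2$ (with an extra $\kappa$ under the paper's heterogeneity form, which is exactly the paper's $\widetilde\eta\le 1/(c_0L)$ with $c_0=8(1+\kappa B^2/(Np_{\min}))$), not merely $\widetilde\eta\le 1/(2L)$.
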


Appendix~\ref{app:proof-conv} gives the formal theorem and proof.
The theorem shows that residual weight error induces a non-vanishing bias floor whose size is controlled by $\epsilon_w^2G^2/\mu$.
The bias floor is driven by the residual weighting error $\epsilon_w$ and amplified by client-gradient heterogeneity $G$.
If $\epsilon_w=0$, the floor vanishes; if $\epsilon_w>0$, larger gradient heterogeneity $G$ makes the same weight error more damaging.

Because the theorem is agnostic to where the error $\epsilon_w$ comes from, it can be applied to different sources of misspecification.
The next corollary applies it to participation-only correction and shows that the omitted enrollment factor itself becomes the residual weighting error.

\begin{corollary}[Participation-only correction; informal]
\label{cor:participation-only-informal}
A method that corrects only round-level participation but omits enrollment has residual weight error even when the participation propensities are known exactly.
In particular, if
\[
\widehat p_{i,r}=\pi_{i,r}^{\mathrm{part}},
\qquad
p_{i,r}=\pi_i^{\mathrm{enroll}}\pi_{i,r}^{\mathrm{part}},
\]
then
\[
\rho_{i,r}
=
\frac{p_{i,r}}{\widehat p_{i,r}}
=
\pi_i^{\mathrm{enroll}},
\qquad
\epsilon_w
=
\max_i |1-\pi_i^{\mathrm{enroll}}|.
\]
Thus enrollment bias appears as structural weighting error and contributes to the bias floor in Theorem~\ref{thm:fedipw-bias-floor}.
\end{corollary}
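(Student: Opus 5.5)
The corollary is essentially a direct computation followed by an invocation of Theorem~\ref{thm:fedipw-bias-floor}. The plan has three steps: compute the ratio $\rho_{i,r}$ under the participation-only choice of $\widehat p_{i,r}$, read off the induced residual weighting error $\epsilon_w$, and then check that this choice satisfies the hypotheses of the theorem, so that its bias-floor term applies with this $\epsilon_w$.

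\textbf{Step 1: the ratio.} By the two-stage factorization from Section~\ref{sec:problem}, $p_{i,r}=\pi_i^{\mathrm{enroll}}\pi_{i,r}^{\mathrm{part}}$. The participation-only method uses $\widehat p_{i,r}=\pi_{i,r}^{\mathrm{part}}$, and positivity gives $\pi_{i,r}^{\mathrm{part}}>0$. Hence
\[
\rho_{i,r}=\frac{\pi_i^{\mathrm{enroll}}\pi_{i,r}^{\mathrm{part}}}{\pi_{i,r}^{\mathrm{part}}}=\pi_i^{\mathrm{enroll}}.
\]
This ratio does not depend on $r$ or on the participation propensity.

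\textbf{Step 2: the weighting error.} Since $\pi_i^{\mathrm{enroll}}\in(0,1]$, we have $|\rho_{i,r}-1|=1-\pi_i^{\mathrm{enroll}}$. The smallest uniform constant satisfying $|\rho_{i,r}-1|\le\epsilon_w$ for all $i$ and $r$ is therefore
\[
\epsilon_w=\max_i|1-\pi_i^{\mathrm{enroll}}|.
\]
This quantity is strictly positive unless every client enrolls with probability one. So residual weighting error is present even though the participation propensities are exact, which is the first claim.

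\textbf{Step 3: invoking the theorem.} Substituting this $\epsilon_w$ into Theorem~\ref{thm:fedipw-bias-floor} gives a floor term of order $\epsilon_w^2G^2/\mu$. This term does not vanish as $R\to\infty$, so enrollment bias enters as structural weighting error.

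\textbf{Main obstacle.} The theorem requires $\epsilon_w$ to be ``sufficiently small.'' Enrollment probabilities can be far below one, in which case $\epsilon_w$ is close to $1$ and the quantitative bound need not hold as stated. I would handle this in two parts:
\begin{itemize}
\item State the conclusion conditionally: whenever $\max_i|1-\pi_i^{\mathrm{enroll}}|$ is within the theorem's admissible range, the floor bound applies.
\item Remark separately that the mismatch itself persists regardless of size. The expected participation-only aggregate equals $\frac1N\sum_i \pi_i^{\mathrm{enroll}}E[\Delta_{i,r}]$, which is a reweighted (and unnormalized) target, not $\bar\Delta_r$. This follows from the same conditioning argument used for Proposition~\ref{prop:oracle-ipw-unbiasedness}.
\end{itemize}

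A secondary point is to check the overlap condition in the appendix. It should require that $\widehat p_{i,r}$ and $p_{i,r}$ be bounded away from zero. This holds if $\pi_{i,r}^{\mathrm{part}}$ is bounded below, which is assumed under positivity.
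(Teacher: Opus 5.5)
Your proposal is correct and follows the same route as the paper. The formal Corollary~\ref{cor:participation-only} in the appendix likewise computes $\rho_{i,r}=\pi_i^{\mathrm{enroll}}$, reads off $\epsilon_w=\max_i|1-\pi_i^{\mathrm{enroll}}|$, and invokes Theorem~\ref{thm:fedipw-bias-floor-formal} only conditionally on this $\epsilon_w$ meeting its small-error conditions, exactly as you plan.

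Two minor refinements to your side remarks:
\begin{enumerate}
\item Overlap for $\widehat p_{i,r}=\pi_{i,r}^{\mathrm{part}}$ is automatic, since $\widehat p_{i,r}\ge \pi_i^{\mathrm{enroll}}\pi_{i,r}^{\mathrm{part}}=p_{i,r}\ge p_{\min}$.
\item In the expected participation-only aggregate, $\pi_i^{\mathrm{enroll}}$ should stay inside the expectation, i.e.\ $\frac1N\sum_i E[\pi_i^{\mathrm{enroll}}\Delta_{i,r}]$, unless you condition on $Z_i$.
\end{enumerate}
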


This shows that even with perfect participation-bias correction, the omitted enrollment factor remains as the relative residual weight error inside $\epsilon_w$, so participation-bias correction alone cannot remove the bias floor when enrollment is heterogeneous.
Appendix~\ref{app:participation-only-corollary} gives the formal statement.

A natural question is whether the resulting bias floor is only an artifact of the proof, or whether the dependence on $\epsilon_w^2G^2/\mu$ is unavoidable.
The next proposition answers this question for misspecified selection-weighted objectives.

\begin{proposition}[Order-sharpness; informal]
\label{prop:weighted-objective-lower-bound-informal}
For any $\epsilon_w\in(0,1/2]$, there exists a two-client strongly convex FL instance and a misspecified selection-weighted objective whose relative weight error is exactly $\epsilon_w$ such that the minimizer $\theta_\rho^\star$ of the misspecified objective satisfies
\[
F(\theta_\rho^\star)-F^\star
\ge
c\,\frac{\epsilon_w^2G^2}{\mu}
\]
for a universal constant $c>0$.
The construction is realized by participation-only IPW when enrollment is heterogeneous and the enrollment factor is omitted.
\end{proposition}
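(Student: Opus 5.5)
We prove the statement with an explicit two-client quadratic instance, since quadratics make both the minimizers and the suboptimality gap computable in closed form. Take $N=2$, $m=1$, and local losses
\[
f_1(\theta)=\tfrac{\mu}{2}(\theta-a)^2,\qquad f_2(\theta)=\tfrac{\mu}{2}(\theta+a)^2,
\]
with $a>0$. Then $F(\theta)=\tfrac{\mu}{2}\theta^2+\tfrac{\mu}{2}a^2$, so $\theta^\star=0$ and $F(\theta)-F^\star=\tfrac{\mu}{2}\theta^2$ exactly. We calibrate heterogeneity by the gradient dispersion at the optimum: $\nabla f_1(\theta^\star)=-\mu a$ and $\nabla f_2(\theta^\star)=\mu a$, so $G=\mu a$. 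If the appendix defines $G$ as a bound on $\|\nabla f_i-\nabla F\|$ over a region, the same identification holds up to a constant for this instance, since $\nabla f_i-\nabla F=\mp\mu a$ for every $\theta$.

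The misspecified selection-weighted objective is the expected naive-weighted aggregate target, $F_\rho(\theta):=\tfrac{1}{N}\sum_i \rho_i f_i(\theta)$, where $\rho_i=p_i/\widehat p_i$. It arises because $E[A_i/\widehat p_i]=\rho_i$ under ignorability, exactly as in the proof of Proposition~\ref{prop:oracle-ipw-unbiasedness}. Set $\rho_1=1$ and $\rho_2=1-\epsilon_w$, so $\max_i|\rho_i-1|=\epsilon_w$ exactly. Solving $\rho_1(\theta-a)+\rho_2(\theta+a)=0$ gives
\[
\theta_\rho^\star=\frac{(\rho_1-\rho_2)a}{\rho_1+\rho_2}=\frac{\epsilon_w a}{2-\epsilon_w}.
\]
Hence
\[
F(\theta_\rho^\star)-F^\star=\frac{\mu}{2}\frac{\epsilon_w^2a^2}{(2-\epsilon_w)^2}\ge\frac{\mu\epsilon_w^2a^2}{8}=\frac{1}{8}\frac{\epsilon_w^2G^2}{\mu},
\]
using $2-\epsilon_w\le 2$. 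This gives $c=1/8$. The restriction $\epsilon_w\le 1/2$ is not needed for this bound. It keeps the weights bounded away from zero, so the positivity and overlap conditions of the formal theorem hold, and it keeps the instance inside the small-$\epsilon_w$ regime of Theorem~\ref{thm:fedipw-bias-floor}. The point of that regime is that the lower bound matches the theorem's floor in order.

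To realize the construction via participation-only IPW, set $\pi_1^{\mathrm{enroll}}=1$ and $\pi_2^{\mathrm{enroll}}=1-\epsilon_w$, with client-specific pre-enrollment covariates $Z_1\neq Z_2$. Take any participation propensities in $(0,1]$, known exactly. With $\widehat p_{i,r}=\pi_{i,r}^{\mathrm{part}}$, Corollary~\ref{cor:participation-only-informal} gives $\rho_{i,r}=\pi_i^{\mathrm{enroll}}$, which reproduces $\rho_1=1$ and $\rho_2=1-\epsilon_w$ with $\epsilon_w=\max_i|1-\pi_i^{\mathrm{enroll}}|$. Mean ignorability holds trivially because each client's update depends on its own deterministic loss, and positivity holds since $1-\epsilon_w\ge 1/2$.

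The main obstacle is definitional rather than computational. The constant $G$ and the notion of the misspecified objective must match the formal appendix versions. In particular, if $G$ is defined as a uniform bound on gradient dissimilarity, we must check that the quadratic instance attains it with equality (it does, since the dissimilarity is constant in $\theta$). Similarly, if the relative weight error is normalized, for example by self-normalized weights, we must recompute $\theta_\rho^\star$ with normalized weights. The ratio $(\rho_1-\rho_2)/(\rho_1+\rho_2)$ is invariant to rescaling both weights, so the bound is unaffected.
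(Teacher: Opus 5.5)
Your proof is correct and is essentially the paper's argument: the same two-client instance $f_{1,2}(\theta)=\frac{\mu}{2}(\theta\mp a)^2$, the same closed-form minimizer of the weighted objective, and $G=\mu a$, which is exactly the minimal admissible constant in the paper's heterogeneity assumption (it holds with equality for $L=\mu$, $B=1$). The one difference is your choice of weights, $(\rho_1,\rho_2)=(1,1-\epsilon_w)$, where the paper uses $(1+\epsilon_w,1-\epsilon_w)$: the paper's choice gives the cleaner constant $\tfrac12$, but $1+\epsilon_w$ is not a valid enrollment probability, and the rescaling it invokes for the participation-only realization raises $\max_i|\rho_i-1|$ to at least $2\epsilon_w/(1+\epsilon_w)$, whereas your choice realizes relative weight error exactly $\epsilon_w$ with $\pi^{\mathrm{enroll}}=(1,1-\epsilon_w)$, at the modest cost of the constant $\tfrac{1}{2(2-\epsilon_w)^2}\ge\tfrac18$.
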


Proposition~\ref{prop:weighted-objective-lower-bound-informal} shows that the dependence on $\epsilon_w^2$, $G^2$, and $1/\mu$ in Theorem~\ref{thm:fedipw-bias-floor} is not a proof artifact.
It is order-sharp for misspecified selection-weighted objectives, including the participation-only failure mode in Corollary~\ref{cor:participation-only-informal}.
The formal construction is given in Appendix~\ref{app:weighted-objective-lower-bound}.

\section{Experiments}
\label{sec:experiments}
\begin{figure}[t]
    \centering
    \includegraphics[width=0.8\textwidth]{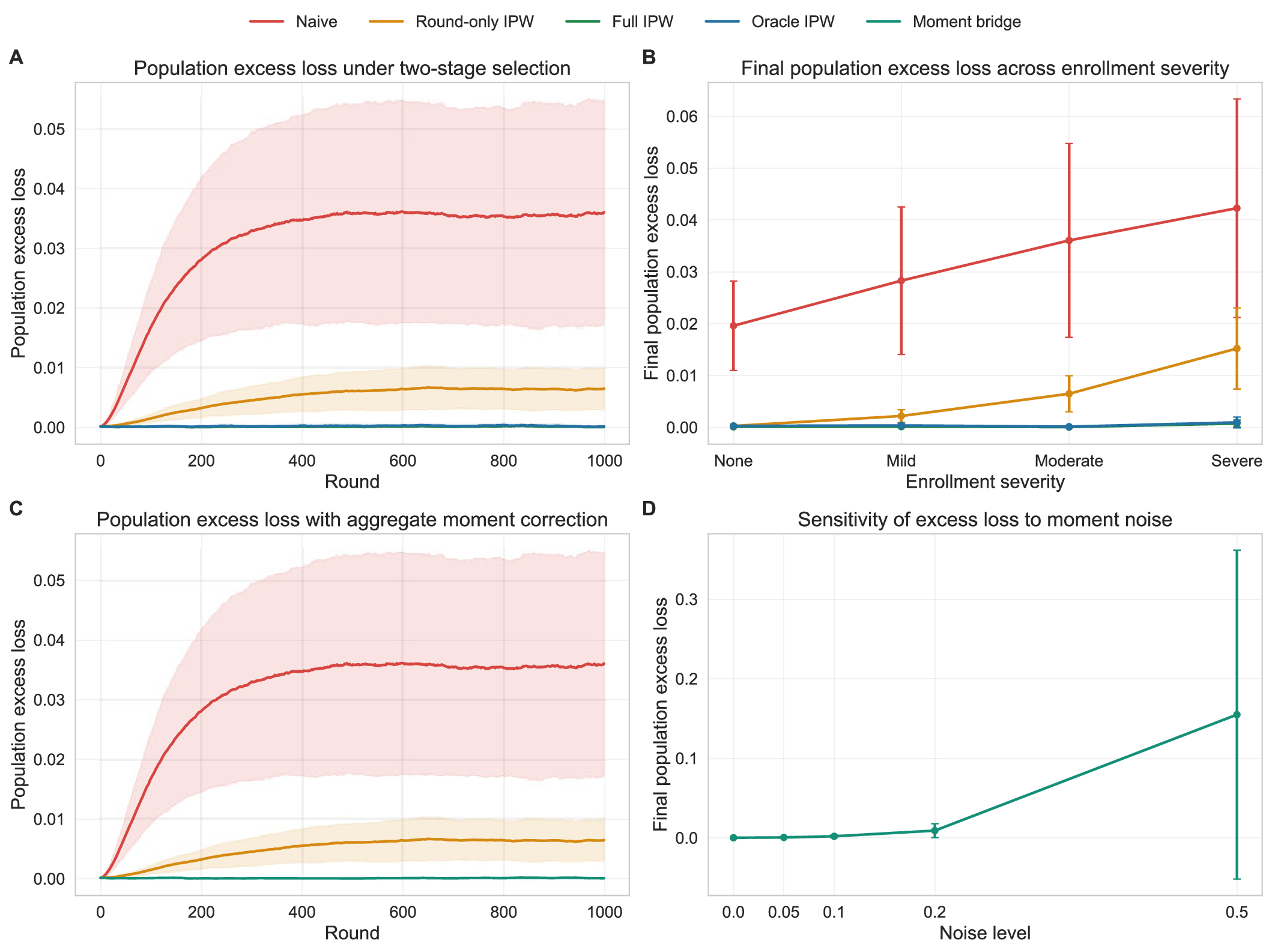}
    \caption{
    Synthetic federated logistic-regression experiments under two-stage selection.
    \textbf{A:} Target-population loss over communication rounds under two-stage selection.
    FedAvg converges to a selected-client solution, while FedIPW closely tracks Oracle FedIPW.
    \textbf{B:} Final target-population loss as enrollment bias increases.
    Participation-only IPW improves over FedAvg but degrades as enrollment bias strengthens, whereas FedIPW remains close to Oracle FedIPW.
    \textbf{C:} Target-population loss under the limited-information setting with exact aggregate population moments.
    Aggregate calibration reduces the residual mismatch left by Participation-only IPW.
    \textbf{D:} Sensitivity of aggregate calibration to noisy population moments, measured by final distance to the target-population optimizer.
    }
    \label{fig:logistic-main}
\end{figure}

We empirically study whether two-stage selection changes the objective optimized by federated learning, and whether the proposed corrections recover the target-population objective.
We also study whether the aggregate calibration method (Section~\ref{sec:limited-info-enrollment}) can still partially reduce the enrollment-induced mismatch when individual-level information about non-enrolled clients is unavailable,

We use synthetic federated logistic regression as a controlled testbed.
This setting is intentionally simple: because the target-population optimum can be computed accurately, we can directly measure whether an algorithm converges to the intended population solution rather than to a selected-client solution.
This makes it possible to isolate selection bias from other sources of error such as model misspecification or nonconvex optimization.

\subsection{Setup}

We consider a finite population of clients, each with a local logistic-regression objective.
Clients differ in pre-enrollment covariates, and these covariates affect both their local data distribution and their probability of being selected.
The selection process follows the two-stage structure in Section~\ref{sec:problem}.
An enrollment mechanism first determines which clients are reachable for federated training.
Then, among enrolled clients, a round-level participation mechanism determines which clients contribute updates in each communication round.

We compare four aggregation schemes:
\begin{itemize}[leftmargin=1.5em]
    \item \emph{Naive FedAvg} averages the observed client updates without correcting for selection.
    \item \emph{Round-only IPW} corrects participation among enrolled clients but ignores the enrollment stage.
    \item \emph{FedIPW} corrects the full two-stage inclusion probability.
    \item \emph{Oracle IPW} uses the true inclusion probabilities and serves as an idealized reference.
\end{itemize}

\paragraph{Two-stage correction recovers the target-population objective}
We first examine whether selection changes the objective optimized by federated training, and whether correcting the full two-stage inclusion probability removes this mismatch.
Figure~\ref{fig:logistic-main}A shows the target-population loss over communication rounds under two-stage selection.
FedAvg converges quickly but plateaus at a substantially higher target-population loss, indicating that the observed participants induce a selected-client objective that differs from \(F\).
Participation-only IPW reduces this mismatch by correcting round-level participation, but it still leaves a visible gap because the enrolled client pool remains non-representative.
FedIPW closes this gap and closely tracks Oracle FedIPW, showing that two-stage correction recovers the target-population objective.

Figure~\ref{fig:logistic-main}B isolates the role of enrollment by varying the strength of enrollment bias while keeping the participation mechanism fixed.
As enrollment bias increases, FedAvg and Participation-only IPW both move farther from the target-population optimum.
This shows that participation correction alone cannot remove the mismatch caused by a non-representative enrolled population.
In contrast, FedIPW remains close to Oracle FedIPW across the sweep, indicating that the two-stage correction is stable as enrollment bias becomes stronger.
The widening gap between Participation-only IPW and FedIPW directly measures the cost of omitting the enrollment stage.

\paragraph{Aggregate calibration mitigates enrollment bias under limited information}

FedIPW requires individual-level covariates for estimating enrollment propensities over the target
population. 
In many deployments, however, individual-level covariates for non-enrolled clients may be unavailable.
We therefore evaluate the aggregate-calibration approach from Section~\ref{sec:limited-info-enrollment}.
This experiment tests whether such population-level information can reduce the enrollment-induced mismatch when full two-stage IPW is infeasible.

Figure~\ref{fig:logistic-main}C shows that aggregate calibration can close much of the gap left by round-only IPW when the population summaries are accurate.
The result is not surprising in this controlled setting: the same covariates that drive enrollment also explain a substantial part of the variation in local objectives.
Matching their population moments therefore moves the effective training distribution closer to the target population.

However, calibration is only a partial correction.
Figure~\ref{fig:logistic-main}D perturbs the supplied population moments and measures the final distance to \(\theta^\star\).
As the aggregate information becomes noisier, the benefit of calibration decreases.
With mildly noisy moments, calibration still improves over round-only IPW.
With sufficiently noisy moments, the correction can become unreliable.
This confirms the intended interpretation of aggregate calibration: it is a practical fallback when full enrollment correction is infeasible, not a replacement for observing or estimating the enrollment mechanism.

\section{Discussion}
\label{sec:discussion}
The main message of this paper is that \emph{who can train at all} is as important as \emph{who participates in a round}. Existing FL debiasing work often focuses on round-level participation, but our framework shows that this can leave a structural mismatch: if the enrolled client pool is not representative of the target population, participation correction alone targets the wrong objective. By separating enrollment from participation, FedIPW turns this representativeness issue into an explicit weighting problem and clarifies which assumptions are needed for the aggregate update to represent the target population.

Our optimization analysis separates statistical propensity-estimation error from structural misspecification. Both enter through the same residual weight-error term and can create a non-vanishing bias floor, but they have different implications: estimation error can shrink with better propensity models, whereas omitting enrollment targets the wrong inclusion probability. Thus, enrollment correction is not a minor refinement of participation correction, but a separate requirement for population-aligned FL.

Finally, the limited-information calibration extension highlights a practical middle ground. Full enrollment IPW requires individual-level covariates for non-enrolled clients, which may be unavailable in real deployments. Aggregate calibration cannot remove arbitrary enrollment bias, but it provides a principled way to use population summaries that would otherwise be ignored. 

\bibliographystyle{plainnat}
\bibliography{references}

@inproceedings{mcmahan2017communication,
  author    = {H. Brendan McMahan and Eider Moore and Daniel Ramage and Seth Hampson and Blaise Ag{"u}era y Arcas},
  title     = {Communication-Efficient Learning of Deep Networks from Decentralized Data},
  booktitle = {Proceedings of the 20th International Conference on Artificial Intelligence and Statistics},
  series    = {Proceedings of Machine Learning Research},
  volume    = {54},
  pages     = {1273--1282},
  year      = {2017}
}

@article{kairouz2021advances,
  title     = {Advances and Open Problems in Federated Learning},
  author    = {Kairouz, Peter and McMahan, H. Brendan and Avent, Brendan and Bellet, Aur{\'e}lien and Bennis, Mehdi and Bhagoji, Arjun Nitin and Bonawitz, Kallista and Charles, Zachary and Cormode, Graham and Cummings, Rachel and others},
  journal   = {Foundations and Trends{\textregistered} in Machine Learning},
  volume    = {14},
  number    = {1--2},
  pages     = {1--210},
  year      = {2021},
  publisher = {Now Publishers, Inc.}
}

@article{konecny2016federated,
  title   = {Federated Optimization: Distributed Machine Learning for On-Device Intelligence},
  author  = {Kone{\v{c}}n{\'y}, Jakub and McMahan, H. Brendan and Ramage, Daniel and Richt{\'a}rik, Peter},
  journal = {arXiv preprint arXiv:1610.02527},
  year    = {2016}
}

@inproceedings{cho2021client,
  title     = {Client Selection in Federated Learning: Convergence Analysis and Power-of-Choice Selection Strategies},
  author    = {Cho, Yae Jee and Wang, Jianyu and Joshi, Gauri},
  booktitle = {International Conference on Learning Representations},
  year      = {2021},
  url       = {https://openreview.net/forum?id=PYAFKBc8GL4}
}

@inproceedings{cho2022biased,
  author    = {Yae Jee Cho and Jianyu Wang and Gauri Joshi},
  title     = {Towards Understanding Biased Client Selection in Federated Learning},
  booktitle = {Proceedings of The 25th International Conference on Artificial Intelligence and Statistics},
  series    = {Proceedings of Machine Learning Research},
  volume    = {151},
  pages     = {10351--10375},
  year      = {2022},
  publisher = {PMLR}
}

@inproceedings{sun2025debiasing,
  author    = {Zhenyu Sun and Ziyang Zhang and Zheng Xu and Gauri Joshi and Pranay Sharma and Ermin Wei},
  title     = {Debiasing Federated Learning with Correlated Client Participation},
  booktitle = {Proceedings of the 13th International Conference on Learning Representations},
  pages     = {16492--16523},
  year      = {2025}
}

@inproceedings{ruan2021flexible,
  title     = {Towards Flexible Device Participation in Federated Learning},
  author    = {Ruan, Yichen and Zhang, Xiaoxi and Liang, Shu-Che and Joe-Wong, Carlee},
  booktitle = {Proceedings of the 24th International Conference on Artificial Intelligence and Statistics},
  series    = {Proceedings of Machine Learning Research},
  volume    = {130},
  pages     = {3403--3411},
  year      = {2021},
  publisher = {PMLR}
}

@inproceedings{wang2022unified,
  title     = {A Unified Analysis of Federated Learning with Arbitrary Client Participation},
  author    = {Wang, Shiqiang and Ji, Mingyue},
  booktitle = {Advances in Neural Information Processing Systems},
  volume    = {35},
  year      = {2022},
  url       = {https://openreview.net/forum?id=qSs7C7c4G8D}
}

@inproceedings{li2020fedprox,
  title     = {Federated Optimization in Heterogeneous Networks},
  author    = {Li, Tian and Sahu, Anit Kumar and Zaheer, Manzil and Sanjabi, Maziar and Talwalkar, Ameet and Smith, Virginia},
  booktitle = {Proceedings of Machine Learning and Systems},
  volume    = {2},
  pages     = {429--450},
  year      = {2020}
}

@inproceedings{karimireddy2020scaffold,
  author    = {Sai Praneeth Karimireddy and Satyen Kale and Mehryar Mohri and Sashank Reddi and Sebastian Stich and Ananda Theertha Suresh},
  title     = {{SCAFFOLD}: Stochastic Controlled Averaging for Federated Learning},
  booktitle = {Proceedings of the 37th International Conference on Machine Learning},
  series    = {Proceedings of Machine Learning Research},
  volume    = {119},
  pages     = {5132--5143},
  year      = {2020},
  publisher = {PMLR}
}

@inproceedings{wang2020fednova,
  title     = {Tackling the Objective Inconsistency Problem in Heterogeneous Federated Optimization},
  author    = {Wang, Jianyu and Liu, Qinghua and Liang, Hao and Joshi, Gauri and Poor, H. Vincent},
  booktitle = {Advances in Neural Information Processing Systems},
  volume    = {33},
  pages     = {7611--7623},
  year      = {2020}
}

@article{horvitz1952generalization,
  author  = {Daniel G. Horvitz and Donovan J. Thompson},
  title   = {A Generalization of Sampling Without Replacement from a Finite Universe},
  journal = {Journal of the American Statistical Association},
  volume  = {47},
  number  = {260},
  pages   = {663--685},
  year    = {1952}
}

@book{sarndal1992model,
  author    = {Carl-Erik S{\"a}rndal and Bengt Swensson and Jan Wretman},
  title     = {Model Assisted Survey Sampling},
  publisher = {Springer},
  year      = {1992}
}

@article{deville1992calibration,
  author  = {Jean-Claude Deville and Carl-Erik S{\"a}rndal},
  title   = {Calibration Estimators in Survey Sampling},
  journal = {Journal of the American Statistical Association},
  volume  = {87},
  number  = {418},
  pages   = {376--382},
  year    = {1992},
  doi     = {10.1080/01621459.1992.10475217}
}

@article{rosenbaum1983central,
  title   = {The central role of the propensity score in observational studies for causal effects},
  author  = {Rosenbaum, Paul R. and Rubin, Donald B.},
  journal = {Biometrika},
  volume  = {70},
  number  = {1},
  pages   = {41--55},
  year    = {1983}
}

@article{robins1986new,
  title   = {A new approach to causal inference in mortality studies with a sustained exposure period---application to control of the healthy worker survivor effect},
  author  = {Robins, James M.},
  journal = {Mathematical Modelling},
  volume  = {7},
  number  = {9--12},
  pages   = {1393--1512},
  year    = {1986}
}

@book{hernan2020causal,
  title     = {Causal Inference: What If},
  author    = {Hern{\'a}n, Miguel A. and Robins, James M.},
  year      = {2020},
  publisher = {Chapman \& Hall/CRC}
}

@article{cole2008constructing,
  title   = {Constructing inverse probability weights for marginal structural models},
  author  = {Cole, Stephen R. and Hern{\'a}n, Miguel A.},
  journal = {American Journal of Epidemiology},
  volume  = {168},
  number  = {6},
  pages   = {656--664},
  year    = {2008}
}

@article{goetze2025floss,
  author  = {David J. Goetze and Dahlia J. Felten and Jeannie R. Albrecht and Rohit Bhattacharya},
  title   = {{FLOSS}: Federated Learning with Opt-Out and Straggler Support},
  journal = {arXiv preprint arXiv:2507.23115},
  year    = {2025}
}

@article{benarba2025bias,
  author  = {Nawel Benarba and Sara Bouchenak},
  title   = {Bias in Federated Learning: A Comprehensive Survey},
  journal = {ACM Computing Surveys},
  volume  = {57},
  number  = {11},
  pages   = {1--36},
  year    = {2025}
}
\appendix

\section{Oracle IPW Unbiasedness}
\label{app:proof-ipw-unbiasedness}

This appendix proves Proposition~\ref{prop:oracle-ipw-unbiasedness}.
Fix a round $r$ and write
\[
W_{i,r}:=(Z_i,X_{i,r}),
\qquad
p_{i,r}:=P(A_{i,r}=1\mid W_{i,r}).
\]
Assume mean ignorability and positivity:
\[
E[\Delta_{i,r}\mid W_{i,r},A_{i,r}=1]
=
E[\Delta_{i,r}\mid W_{i,r}],
\qquad
p_{i,r}>0.
\]
The oracle IPW estimator is
\[
\hat{\bar\Delta}_r^{\mathrm{IPW}}
=
\frac{1}{N}\sum_{i=1}^N
\frac{A_{i,r}}{p_{i,r}}\Delta_{i,r}.
\]
We define the target-population mean update as
\[
\bar\Delta_r
:=
\frac{1}{N}\sum_{i=1}^N E[\Delta_{i,r}],
\]
or equivalently as the expectation of the finite-population mean update.

For each client $i$,
\begin{align}
E\!\left[
\frac{A_{i,r}}{p_{i,r}}\Delta_{i,r}
\right]
&=
E\!\left[
\frac{1}{p_{i,r}}
E[A_{i,r}\Delta_{i,r}\mid W_{i,r}]
\right] \\
&=
E\!\left[
E[\Delta_{i,r}\mid W_{i,r},A_{i,r}=1]
\right] \\
&=
E\!\left[
E[\Delta_{i,r}\mid W_{i,r}]
\right] \\
&=
E[\Delta_{i,r}].
\end{align}
The second equality uses
\[
E[A_{i,r}\Delta_{i,r}\mid W_{i,r}]
=
p_{i,r}E[\Delta_{i,r}\mid W_{i,r},A_{i,r}=1],
\]
and the third equality uses ignorability.
Averaging over clients gives
\[
E[\hat{\bar\Delta}_r^{\mathrm{IPW}}]
=
\frac{1}{N}\sum_{i=1}^N E[\Delta_{i,r}]
=
\bar\Delta_r.
\]

\section{Formal Statement and Proof for Plug-in FedIPW}
\label{app:proof-plugin-ipw}

This appendix gives the formal statement corresponding to Proposition~\ref{prop:fedipw-plugin}.
Fix a communication round $r$.
Let
\[
p_{i,r}
=
P(A_{i,r}=1\mid Z_i,X_{i,r})
=
\pi_i^{\mathrm{enroll}}\pi_{i,r}^{\mathrm{part}}
\]
denote the true two-stage inclusion probability, and let
\[
\hat p_{i,r}
=
\hat\pi_i^{\mathrm{enroll}}\hat\pi_{i,r}^{\mathrm{part}}
\]
be its plug-in estimate.
The oracle IPW estimator and plug-in FedIPW estimator are
\[
\hat{\bar\Delta}_r^{\mathrm{IPW}}
:=
\frac{1}{N}\sum_{i=1}^N
\frac{A_{i,r}}{p_{i,r}}\Delta_{i,r},
\qquad
\hat{\bar\Delta}_r^{\mathrm{FedIPW}}
:=
\frac{1}{N}\sum_{i=1}^N
\frac{A_{i,r}}{\hat p_{i,r}}\Delta_{i,r}.
\]

All limits in this section are taken as the samples used to estimate the enrollment and participation propensity models grow.

\begin{assumption}[Two-stage overlap at round $r$]
\label{assump:plugin-overlap}
There exists a constant $c_r>0$ such that, for all $i\in\{1,\dots,N\}$,
\[
p_{i,r}\ge c_r,
\qquad
\hat p_{i,r}\ge c_r.
\]
\end{assumption}

\begin{assumption}[Uniform consistency of propensity components]
\label{assump:plugin-propensity-consistency}
The enrollment and participation propensity estimators satisfy
\[
E\!\left[
\max_{1\le i\le N}
|\hat\pi_i^{\mathrm{enroll}}-\pi_i^{\mathrm{enroll}}|
\right]\to 0,
\qquad
E\!\left[
\max_{1\le i\le N}
|\hat\pi_{i,r}^{\mathrm{part}}-\pi_{i,r}^{\mathrm{part}}|
\right]\to 0.
\]
\end{assumption}

\begin{assumption}[Bounded observed-update moment]
\label{assump:plugin-bounded-update}
Let
\[
U_{N,r}:=
\frac{1}{N}\sum_{i=1}^N A_{i,r}\|\Delta_{i,r}\|_2.
\]
Then
\[
E[U_{N,r}^2]=O(1).
\]
\end{assumption}

\begin{proposition}[Plug-in FedIPW consistency at a fixed round]
\label{prop:plugin-remainder}
Fix a communication round $r$.
Suppose Assumptions~\ref{assump:plugin-overlap}--\ref{assump:plugin-bounded-update} hold.
If the oracle IPW estimator is unbiased for the target-population mean update, i.e.
\[
E[\hat{\bar\Delta}_r^{\mathrm{IPW}}]=\bar\Delta_r,
\]
then
\[
E[\hat{\bar\Delta}_r^{\mathrm{FedIPW}}]-\bar\Delta_r \to 0.
\]
\end{proposition}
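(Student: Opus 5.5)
The plan is to compare the plug-in estimator with the oracle estimator directly. Since the oracle is unbiased by hypothesis (Proposition~\ref{prop:oracle-ipw-unbiasedness}), it suffices to show
\[
E\big[\hat{\bar\Delta}_r^{\mathrm{FedIPW}}-\hat{\bar\Delta}_r^{\mathrm{IPW}}\big]\to 0 .
\]
Writing the difference client by client gives
\[
\hat{\bar\Delta}_r^{\mathrm{FedIPW}}-\hat{\bar\Delta}_r^{\mathrm{IPW}}
=\frac{1}{N}\sum_{i=1}^N A_{i,r}\Big(\frac{1}{\hat p_{i,r}}-\frac{1}{p_{i,r}}\Big)\Delta_{i,r}
=\frac{1}{N}\sum_{i=1}^N A_{i,r}\frac{p_{i,r}-\hat p_{i,r}}{\hat p_{i,r}p_{i,r}}\Delta_{i,r}.
\]
By the triangle inequality (Jensen for the norm of an expectation) and Assumption~\ref{assump:plugin-overlap}, its norm is bounded by
\[
\Big\|E\big[\hat{\bar\Delta}_r^{\mathrm{FedIPW}}-\hat{\bar\Delta}_r^{\mathrm{IPW}}\big]\Big\|_2
\le \frac{1}{c_r^2}\,E\Big[M_N\,U_{N,r}\Big],
\qquad
M_N:=\max_{1\le i\le N}|\hat p_{i,r}-p_{i,r}| .
\]

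Next I control $M_N$ through the two components. I use the product decomposition
\[
\hat p_{i,r}-p_{i,r}=(\hat\pi_i^{\mathrm{enroll}}-\pi_i^{\mathrm{enroll}})\hat\pi_{i,r}^{\mathrm{part}}+\pi_i^{\mathrm{enroll}}(\hat\pi_{i,r}^{\mathrm{part}}-\pi_{i,r}^{\mathrm{part}}).
\]
All four propensities lie in $[0,1]$; I will clip the estimates if necessary, which is harmless. Hence
\[
M_N\le \max_i|\hat\pi_i^{\mathrm{enroll}}-\pi_i^{\mathrm{enroll}}|+\max_i|\hat\pi_{i,r}^{\mathrm{part}}-\pi_{i,r}^{\mathrm{part}}|,
\]
and Assumption~\ref{assump:plugin-propensity-consistency} gives $E[M_N]\to 0$. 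Also $0\le M_N\le 1$, so $E[M_N^2]\le E[M_N]\to 0$.

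To finish, I apply Cauchy--Schwarz:
\[
E[M_N U_{N,r}]\le \sqrt{E[M_N^2]}\,\sqrt{E[U_{N,r}^2]} .
\]
The first factor tends to zero and the second is $O(1)$ by Assumption~\ref{assump:plugin-bounded-update}, so the bias difference vanishes. Combining this with oracle unbiasedness gives $E[\hat{\bar\Delta}_r^{\mathrm{FedIPW}}]-\bar\Delta_r\to 0$.

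The main obstacle is the dependence between the estimated propensities and the updates. Because $\hat p_{i,r}$ is fitted from data that may include the same $A_{i,r}$, the error and the update cannot be factored in expectation. This is exactly why I bound through the maximum $M_N$ and use Cauchy--Schwarz rather than conditioning. Converting $L^1$ consistency into $L^2$ requires boundedness of $M_N$, which I obtain from the propensities lying in $[0,1]$.
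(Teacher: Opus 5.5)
Your proposal is correct and matches the paper's proof essentially step for step: the same plug-in remainder, the same product decomposition giving $M_N$ bounded by the sum of the two component errors, the same overlap bound $\|R_{N,r}\|_2\le M_N U_{N,r}/c_r^2$, the same upgrade $E[M_N^2]\le E[M_N]$ from boundedness, and the same Cauchy--Schwarz step with Assumption~\ref{assump:plugin-bounded-update}. The differences are cosmetic: you use $M_N\le 1$ where the paper uses $M_N\le 1-c_r$, and you invoke clipping to put the estimated propensities in $[0,1]$, which the paper simply takes as given (strictly, clipping changes the estimator, so it is cleaner to make this range part of the assumptions, as the paper implicitly does).
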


\begin{proof}
Define the plug-in remainder
\[
R_{N,r}
:=
\hat{\bar\Delta}_r^{\mathrm{FedIPW}}
-
\hat{\bar\Delta}_r^{\mathrm{IPW}}
=
\frac{1}{N}\sum_{i=1}^N
A_{i,r}\Delta_{i,r}
\left(
\frac{1}{\hat p_{i,r}}-\frac{1}{p_{i,r}}
\right).
\]
Also define
\[
e_N^{\mathrm{enroll}}
:=
\max_{1\le i\le N}
|\hat\pi_i^{\mathrm{enroll}}-\pi_i^{\mathrm{enroll}}|,
\qquad
e_{N,r}^{\mathrm{part}}
:=
\max_{1\le i\le N}
|\hat\pi_{i,r}^{\mathrm{part}}-\pi_{i,r}^{\mathrm{part}}|,
\]
and
\[
e_{N,r}
:=
\max_{1\le i\le N}
|\hat p_{i,r}-p_{i,r}|.
\]

\paragraph{Step 1: Consistency of the product propensity.}
For each $i$,
\[
\hat p_{i,r}-p_{i,r}
=
(\hat\pi_i^{\mathrm{enroll}}-\pi_i^{\mathrm{enroll}})
\hat\pi_{i,r}^{\mathrm{part}}
+
\pi_i^{\mathrm{enroll}}
(\hat\pi_{i,r}^{\mathrm{part}}-\pi_{i,r}^{\mathrm{part}}).
\]
Since all propensities lie in $[0,1]$,
\[
e_{N,r}
\le
e_N^{\mathrm{enroll}}+e_{N,r}^{\mathrm{part}}.
\]
By Assumption~\ref{assump:plugin-propensity-consistency},
\[
E[e_{N,r}]
\le
E[e_N^{\mathrm{enroll}}]
+
E[e_{N,r}^{\mathrm{part}}]
\to 0.
\]

\paragraph{Step 2: Remainder bound.}
By Assumption~\ref{assump:plugin-overlap},
\[
\left|
\frac{1}{\hat p_{i,r}}-\frac{1}{p_{i,r}}
\right|
=
\frac{|\hat p_{i,r}-p_{i,r}|}
{\hat p_{i,r}p_{i,r}}
\le
\frac{e_{N,r}}{c_r^2}.
\]
Therefore,
\[
\|R_{N,r}\|_2
\le
\frac{U_{N,r}e_{N,r}}{c_r^2}.
\]

\paragraph{Step 3: Vanishing plug-in bias.}
Since propensities lie in $[c_r,1]$, the error satisfies
$0\le e_{N,r}\le 1-c_r<1$, and hence
$e_{N,r}^2\le e_{N,r}$.
Thus
\[
E[e_{N,r}^2]\le E[e_{N,r}]\to0.
\]
By Cauchy--Schwarz and Assumption~\ref{assump:plugin-bounded-update},
\[
E\|R_{N,r}\|_2
\le
\frac{1}{c_r^2}
\{E[U_{N,r}^2]\}^{1/2}
\{E[e_{N,r}^2]\}^{1/2}
\to0.
\]
Therefore,
\[
\left\|
E[\hat{\bar\Delta}_r^{\mathrm{FedIPW}}]
-
E[\hat{\bar\Delta}_r^{\mathrm{IPW}}]
\right\|_2
\le
E\|R_{N,r}\|_2
\to0.
\]
Since $E[\hat{\bar\Delta}_r^{\mathrm{IPW}}]=\bar\Delta_r$, we obtain
\[
E[\hat{\bar\Delta}_r^{\mathrm{FedIPW}}]-\bar\Delta_r\to0.
\]
\end{proof}

\paragraph{Sufficient conditions for the bounded-update assumption.}
Assumption~\ref{assump:plugin-bounded-update} holds, for example, if local stochastic gradients are bounded:
\[
\Delta_{i,r}
=
-\eta_r\sum_{s=0}^{K_{i,r}-1}\mathbf g_{i,r}^{(s)},
\qquad
\|\mathbf g_{i,r}^{(s)}\|_2\le G_r,
\qquad
K_{i,r}\le K_{\max,r}.
\]
Then
\[
\|\Delta_{i,r}\|_2
\le
\eta_rK_{\max,r}G_r,
\]
so $U_{N,r}=O(1)$ deterministically.
More generally, Assumption~\ref{assump:plugin-bounded-update} holds if
\[
\frac{1}{N}\sum_{i=1}^N
E[A_{i,r}\|\Delta_{i,r}\|_2^2]
\le C_\Delta<\infty,
\]
because Jensen's inequality gives
\[
U_{N,r}^2
=
\left(
\frac{1}{N}\sum_{i=1}^N
A_{i,r}\|\Delta_{i,r}\|_2
\right)^2
\le
\frac{1}{N}\sum_{i=1}^N
A_{i,r}\|\Delta_{i,r}\|_2^2.
\]

\section{Formal Treatment of Aggregate-Calibrated FedIPW}
\label{app:aggregate-calibration}

This appendix formalizes the aggregate-calibration extension from Section~\ref{sec:limited-info-enrollment}.
This regime applies when pre-enrollment covariates are observed only for enrolled clients, while aggregate target-population summaries are available.

For each enrolled client $i\in\mathcal E:=\{i:E_i=1\}$, suppose we observe $Z_i$.
Let $b(Z_i)\in\mathbb R^q$ be a chosen balance map, and suppose an external source provides the target-population moment
\[
\mu_b=E[b(Z_i)].
\]
We construct normalized enrollment-calibration weights $q_i\ge0$, $i\in\mathcal E$, satisfying
\begin{equation}
\sum_{i\in\mathcal E}q_i=1,
\qquad
\sum_{i\in\mathcal E}q_i b(Z_i)=\mu_b.
\label{eq:appendix-calibration-constraints}
\end{equation}
For example, one may choose the feasible weights closest to uniform:
\begin{equation}
\min_{\{q_i\}_{i\in\mathcal E}}
\sum_{i\in\mathcal E}
\left(q_i-\frac{1}{N^{\mathrm{enroll}}}\right)^2
\quad
\text{subject to}
\quad
\sum_{i\in\mathcal E}q_i=1,
\quad
\sum_{i\in\mathcal E}q_i b(Z_i)=\mu_b,
\label{eq:appendix-quadratic-calibration}
\end{equation}
where $N^{\mathrm{enroll}}=|\mathcal E|$.
Other calibration objectives, such as entropy tilting, can also be used \citep{deville1992calibration}.

The aggregate-calibrated round update is
\begin{equation}
\hat{\bar\Delta}_r^{\mathrm{cal}}
:=
\sum_{i:A_{i,r}=1}
q_i\frac{\Delta_{i,r}}{\hat\pi_{i,r}^{\mathrm{part}}}.
\label{eq:appendix-calibrated-estimator}
\end{equation}

\begin{assumption}[Calibration feasibility]
\label{assump:calibration-feasibility}
The target-population moment vector $\mu_b$ lies in the convex hull of
$\{b(Z_i):i\in\mathcal E\}$.
\end{assumption}

\begin{assumption}[Participation ignorability among enrolled clients]
\label{assump:cal-participation-ignorability}
For every enrolled client $i$ and round $r$,
\[
E[\Delta_{i,r}\mid Z_i,X_{i,r},E_i=1,A_{i,r}=1]
=
E[\Delta_{i,r}\mid Z_i,X_{i,r},E_i=1].
\]
\end{assumption}

\begin{assumption}[Enrollment ignorability given balance covariates]
\label{assump:cal-enrollment-ignorability}
For every client $i$ and round $r$,
\[
E[\Delta_{i,r}\mid Z_i,E_i=1]
=
E[\Delta_{i,r}\mid Z_i].
\]
\end{assumption}

\begin{assumption}[Moment sufficiency]
\label{assump:cal-moment-sufficiency}
There exists a matrix $B_r$ such that
\[
E[\Delta_{i,r}\mid Z_i]
=
B_r b(Z_i).
\]
\end{assumption}

\begin{assumption}[Participation overlap and consistency]
\label{assump:cal-participation-overlap-consistency}
There exists $c_r>0$ such that, for all enrolled clients,
\[
\pi_{i,r}^{\mathrm{part}}\ge c_r,
\qquad
\hat\pi_{i,r}^{\mathrm{part}}\ge c_r,
\]
and
\[
E\!\left[
\max_{i:E_i=1}
\left|
\hat\pi_{i,r}^{\mathrm{part}}
-
\pi_{i,r}^{\mathrm{part}}
\right|
\right]\to0.
\]
\end{assumption}

\begin{assumption}[Bounded calibrated update moment]
\label{assump:cal-bounded-update}
\[
E\!\left[
\left(
\sum_{i:A_{i,r}=1}
q_i\|\Delta_{i,r}\|_2
\right)^2
\right]
=O(1).
\]
\end{assumption}

\begin{proposition}[Aggregate-calibration consistency under moment sufficiency]
\label{prop:bridge-consistency}
Fix a communication round $r$.
Let $\hat{\bar\Delta}_r^{\mathrm{cal}}$ be constructed as in~\eqref{eq:appendix-calibrated-estimator}.
Under Assumptions~\ref{assump:calibration-feasibility}--\ref{assump:cal-bounded-update},
\[
E[\hat{\bar\Delta}_r^{\mathrm{cal}}]-\bar\Delta_r\to0.
\]
\end{proposition}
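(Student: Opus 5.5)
We follow the same two-part template as the proof of Proposition~\ref{prop:plugin-remainder}. First we introduce an oracle calibrated estimator that uses the true participation propensities,
\[
\tilde\Delta_r^{\mathrm{cal}}:=\sum_{i:A_{i,r}=1}q_i\frac{\Delta_{i,r}}{\pi_{i,r}^{\mathrm{part}}}.
\]
We then show (i) that $\tilde\Delta_r^{\mathrm{cal}}$ is centered on $\bar\Delta_r$, and (ii) that the plug-in remainder $\hat{\bar\Delta}_r^{\mathrm{cal}}-\tilde\Delta_r^{\mathrm{cal}}$ vanishes in $L^1$.

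\textbf{Step (ii).} This step repeats Steps~2--3 of the proof of Proposition~\ref{prop:plugin-remainder} almost verbatim. By Assumption~\ref{assump:cal-participation-overlap-consistency},
\[
\bigl|1/\hat\pi_{i,r}^{\mathrm{part}}-1/\pi_{i,r}^{\mathrm{part}}\bigr|\le e_{N,r}^{\mathrm{part}}/c_r^2 .
\]
Hence the remainder has norm at most $c_r^{-2}\,e_{N,r}^{\mathrm{part}}\sum_{i:A_{i,r}=1}q_i\|\Delta_{i,r}\|_2$. Since $e^{\mathrm{part}}_{N,r}\le 1$, we have $E[(e^{\mathrm{part}}_{N,r})^2]\le E[e^{\mathrm{part}}_{N,r}]\to0$. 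Cauchy--Schwarz together with Assumption~\ref{assump:cal-bounded-update} then gives that the expected norm of the remainder tends to zero.

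\textbf{Step (i).} We condition on the enrollment-stage information $\mathcal F:=\sigma(E_1,\dots,E_N,\{Z_i\}_{i})$ together with the $X_{i,r}$. The weights $q_i$ are deterministic functions of $\{Z_i\}_{i\in\mathcal E}$ and $\mu_b$, so they are $\mathcal F$-measurable. For each enrolled $i$, we repeat the computation in Appendix~\ref{app:proof-ipw-unbiasedness}: using $E[A_{i,r}\Delta_{i,r}\mid\cdot]=\pi^{\mathrm{part}}_{i,r}E[\Delta_{i,r}\mid\cdot,A_{i,r}=1]$ and Assumption~\ref{assump:cal-participation-ignorability}, we get
\[
E\bigl[A_{i,r}\Delta_{i,r}/\pi^{\mathrm{part}}_{i,r}\mid Z_i,X_{i,r},E_i=1\bigr]=E[\Delta_{i,r}\mid Z_i,X_{i,r},E_i=1].
\]
Integrating out $X_{i,r}$ given $(Z_i,E_i=1)$, then applying Assumption~\ref{assump:cal-enrollment-ignorability} and Assumption~\ref{assump:cal-moment-sufficiency}, yields $B_r b(Z_i)$. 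Combined with $\sum_{i\in\mathcal E}q_i b(Z_i)=\mu_b$ (Assumption~\ref{assump:calibration-feasibility} guarantees feasibility), this gives
\[
E[\tilde\Delta_r^{\mathrm{cal}}\mid \mathcal E,\{Z_i\}_{i\in\mathcal E}]=\sum_{i\in\mathcal E}q_iB_rb(Z_i)=B_r\mu_b .
\]
Taking expectations, $B_r\mu_b=B_rE[b(Z_i)]=E[E[\Delta_{i,r}\mid Z_i]]=E[\Delta_{i,r}]$. Averaging over $i$ gives $\bar\Delta_r$, so $E[\tilde\Delta_r^{\mathrm{cal}}]=\bar\Delta_r$ exactly. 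Combining (i) and (ii) proves the claim.

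\textbf{Main obstacle.} The delicate point is the conditioning argument in step (i). The weights $q_i$ depend jointly on all enrolled $Z_j$, so the conditional expectations must be taken with $\{Z_j\}_{j\in\mathcal E}$ (and the $X_{j,r}$) fixed, not only $Z_i$. This requires that client $i$'s participation and update mean, given its own $(Z_i,X_{i,r},E_i)$, are unaffected by other clients' covariates. We will state this cross-client conditional independence explicitly, as the natural reading of the per-client assumptions and the DAG. A secondary issue is that $\mu_b=E[b(Z_i)]$ must be the same for all $i$, i.e.\ the $Z_i$ are identically distributed, so that $B_r\mu_b$ equals $E[\Delta_{i,r}]$ for every client.
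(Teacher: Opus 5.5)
Your proposal is correct and follows the paper's proof essentially step for step. Both use an oracle calibrated estimator with the true participation propensities, a plug-in remainder bounded by $c_r^{-2}e_r\sum_{i:A_{i,r}=1}q_i\|\Delta_{i,r}\|_2$ and sent to zero via $e_r^2\le e_r$ and Cauchy--Schwarz, and centering of the oracle through participation ignorability, enrollment ignorability, moment sufficiency, and the calibration constraint. Two remarks on your side points. The cross-client conditional independence you flag is exactly what the paper uses implicitly when it conditions on all enrolled covariates and weights and then evaluates per-client conditional expectations. Your identical-distribution concern goes away if $\mu_b$ is read as $\frac{1}{N}\sum_i E[b(Z_i)]$, because $\bar\Delta_r=\frac{1}{N}\sum_i E[\Delta_{i,r}]$ and $B_r$ is common to all clients.
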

\begin{proof}
Define the oracle aggregate-calibrated estimator
\[
\hat{\bar\Delta}_r^{\mathrm{cal},\star}
:=
\sum_{i:A_{i,r}=1}
q_i\frac{\Delta_{i,r}}{\pi_{i,r}^{\mathrm{part}}}.
\]
We first show that replacing the true participation propensities with estimated propensities has asymptotically negligible effect.
Let
\[
R_r
:=
\hat{\bar\Delta}_r^{\mathrm{cal}}
-
\hat{\bar\Delta}_r^{\mathrm{cal},\star}
=
\sum_{i:A_{i,r}=1}
q_i\Delta_{i,r}
\left(
\frac{1}{\hat\pi_{i,r}^{\mathrm{part}}}
-
\frac{1}{\pi_{i,r}^{\mathrm{part}}}
\right).
\]
Set
\[
e_r
:=
\max_{i:E_i=1}
\left|
\hat\pi_{i,r}^{\mathrm{part}}
-
\pi_{i,r}^{\mathrm{part}}
\right|,
\qquad
U_r^q
:=
\sum_{i:A_{i,r}=1}
q_i\|\Delta_{i,r}\|_2 .
\]
By participation overlap,
\[
\left|
\frac{1}{\hat\pi_{i,r}^{\mathrm{part}}}
-
\frac{1}{\pi_{i,r}^{\mathrm{part}}}
\right|
\le
\frac{e_r}{c_r^2}.
\]
Therefore,
\[
\|R_r\|_2
\le
\frac{U_r^q e_r}{c_r^2}.
\]
Since the propensities are bounded in $[c_r,1]$, we have $e_r^2\le e_r$.
By participation propensity consistency, $E[e_r]\to0$, and hence $E[e_r^2]\to0$.
Using Cauchy--Schwarz and Assumption~\ref{assump:cal-bounded-update},
\[
E\|R_r\|_2
\le
\frac{1}{c_r^2}
\{E[(U_r^q)^2]\}^{1/2}
\{E[e_r^2]\}^{1/2}
\to0.
\]
Thus,
\[
\left\|
E[\hat{\bar\Delta}_r^{\mathrm{cal}}]
-
E[\hat{\bar\Delta}_r^{\mathrm{cal},\star}]
\right\|_2
\le
E\|R_r\|_2
\to0.
\]

It remains to evaluate the oracle calibrated estimator.
Conditioning on the enrolled covariates and calibration weights,
\[
E[\hat{\bar\Delta}_r^{\mathrm{cal},\star}]
=
E\!\left[
\sum_{i:E_i=1}
q_i
E\!\left[
\frac{A_{i,r}\Delta_{i,r}}{\pi_{i,r}^{\mathrm{part}}}
\,\middle|\,
Z_i,E_i=1
\right]
\right].
\]
Conditioning further on $X_{i,r}$ and using participation ignorability among enrolled clients gives
\[
E\!\left[
\frac{A_{i,r}\Delta_{i,r}}{\pi_{i,r}^{\mathrm{part}}}
\,\middle|\,
Z_i,E_i=1
\right]
=
E[\Delta_{i,r}\mid Z_i,E_i=1].
\]
By enrollment ignorability given $Z_i$,
\[
E[\Delta_{i,r}\mid Z_i,E_i=1]
=
E[\Delta_{i,r}\mid Z_i].
\]
By moment sufficiency,
\[
E[\Delta_{i,r}\mid Z_i]
=
B_r b(Z_i).
\]
Therefore, using the calibration constraint,
\[
E[\hat{\bar\Delta}_r^{\mathrm{cal},\star}]
=
E\!\left[
B_r
\sum_{i:E_i=1} q_i b(Z_i)
\right]
=
B_r\mu_b.
\]
On the other hand, the target-population mean update satisfies
\[
\bar\Delta_r
=
E[\Delta_{i,r}]
=
E\!\left[E[\Delta_{i,r}\mid Z_i]\right]
=
E[B_r b(Z_i)]
=
B_r\mu_b.
\]
Hence
\[
E[\hat{\bar\Delta}_r^{\mathrm{cal},\star}]
=
\bar\Delta_r.
\]
Combining this equality with the estimated--oracle difference gives
\[
E[\hat{\bar\Delta}_r^{\mathrm{cal}}]-\bar\Delta_r\to0.
\]
\end{proof}

\section{Formal Optimization Analysis under Weight Misspecification}
\label{app:proof-conv}

This appendix gives the formal version of Theorem~\ref{thm:fedipw-bias-floor} and proves the finite-step bias-floor bound.
The proof has four ingredients:
a small toolbox of inequalities used throughout (Section~\ref{app:toolbox});
a decomposition of the FedIPW update into mean and noise components (Section~\ref{app:decomposition});
a bound on the local drift (Section~\ref{app:drift});
and a one-round function-value descent inequality that is unrolled to the final bound (Section~\ref{app:descent}).
We use $C>0$ for a universal constant whose value may change line to line.

\subsection{Setup and assumptions}
\label{app:conv-setup-assumptions}

At round $r$, client $i$ initializes $y_{i,0}=\theta_r$ and runs $K$ local SGD steps
\[
y_{i,k}
=
y_{i,k-1}-\eta g_i(y_{i,k-1}),
\qquad k=1,\dots,K,
\]
where $g_i(\cdot)$ is a stochastic gradient of $f_i$.
Only clients with $A_{i,r}=1$ return their final model difference.
The server update is
\[
\theta_{r+1}
=
\theta_r
+
\frac{\gamma}{N}
\sum_{i:A_{i,r}=1}
\frac{y_{i,K}-\theta_r}{\widehat p_{i,r}},
\]
where $\gamma\ge1$ is the server step size.
Equivalently, defining the effective step size
\[
\widetilde\eta:=K\gamma\eta,
\]
the update can be written as
\[
\theta_{r+1}
=
\theta_r
-
\frac{\widetilde\eta}{KN}
\sum_{i=1}^N
\sum_{k=1}^K
\frac{A_{i,r}}{\widehat p_{i,r}}
g_i(y_{i,k-1}).
\]

Let
\[
\rho_{i,r}
:=
\frac{p_{i,r}}{\widehat p_{i,r}}
\]
be the relative weight ratio, where $p_{i,r}$ is the true two-stage inclusion probability and $\widehat p_{i,r}$ is the probability used by the algorithm.

\begin{assumption}[Smoothness and strong convexity]
\label{assump:smooth-sc}
Each $f_i$ is $L$-smooth, and
\[
F(\theta)=\frac{1}{N}\sum_{i=1}^N f_i(\theta)
\]
is $\mu$-strongly convex.
We write $\kappa:=L/\mu$.
\end{assumption}

\begin{assumption}[Stochastic-gradient noise]
\label{assump:sgd-noise}
For every client $i$ and parameter $\theta$,
\[
E[g_i(\theta)\mid\theta]=\nabla f_i(\theta),
\qquad
E\|g_i(\theta)-\nabla f_i(\theta)\|^2\le\sigma^2.
\]
Stochastic-gradient noise is independent across clients conditional on the local parameters.
\end{assumption}

\begin{assumption}[Bounded gradient heterogeneity]
\label{assump:bgd-conv}
There exist constants $G\ge0$ and $B\ge1$ such that, for all $\theta$,
\[
\frac{1}{N}\sum_{i=1}^N
\|\nabla f_i(\theta)\|^2
\le
G^2+2LB^2(F(\theta)-F^\star).
\]
In particular,
\[
\frac{1}{N}\sum_{i=1}^N
\|\nabla f_i(\theta^\star)\|^2
\le
G^2.
\]
\end{assumption}

\begin{assumption}[Overlap and relative weight error]
\label{assump:overlap-conv}
There exist constants $p_{\min}>0$ and $\epsilon_w\in[0,1/2]$ such that, for all clients $i$ and rounds $r$,
\[
p_{i,r}\ge p_{\min},
\qquad
\widehat p_{i,r}\ge p_{\min},
\qquad
|\rho_{i,r}-1|\le\epsilon_w.
\]
\end{assumption}

\begin{assumption}[Selection independent of fresh randomness]
\label{assump:indep-conv}
Conditional on $(Z_i,X_{i,r})$, $\theta_r$, and $\{\widehat p_{i,r}\}_{i=1}^N$, the participation indicators $\{A_{i,r}\}_{i=1}^N$ are mutually independent and are jointly independent of the within-round stochastic-gradient noise.
\end{assumption}

\subsection{Formal theorem}
\label{app:conv-formal-theorem}

Define
\[
V
:=
\frac{\sigma^2}{KNp_{\min}}
+
\frac{L\sigma^2}{K\gamma^2}
+
\frac{G^2}{Np_{\min}}.
\]

\begin{theorem}[Bias-floor decomposition, formal]
\label{thm:fedipw-bias-floor-formal}
Under Assumptions~\ref{assump:smooth-sc}--\ref{assump:indep-conv}, set
\[
c_0:=8\left(1+\frac{\kappa B^2}{Np_{\min}}\right).
\]
Suppose
\[
\epsilon_w^2
\le
c_{\mathrm{abs}}\frac{\mu}{LB^2}
\]
for a sufficiently small universal constant $c_{\mathrm{abs}}>0$, and suppose the effective step size satisfies
\[
\widetilde\eta\le \frac{1}{c_0L}.
\]
Then the last iterate after $R$ communication rounds satisfies
\begin{equation}
E[F(\theta_R)-F^\star]
\le
\left(1-\frac{\mu\widetilde\eta}{8}\right)^R h_0
+
\frac{C\widetilde\eta V}{\mu}
+
\frac{C\epsilon_w^2G^2}{\mu},
\label{eq:app-formal-bias-floor-finite}
\end{equation}
where $h_0:=F(\theta_0)-F^\star$ and $C>0$ is a universal constant.
\end{theorem}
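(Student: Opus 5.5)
The plan follows the roadmap announced at the start of this appendix: write the update as a perturbed gradient step, bound the perturbations, prove a one-round descent inequality, and unroll it. Let $\mathcal F_r$ be the sigma-field generated by $\theta_r$, the covariates, and $\{\widehat p_{i,r}\}$. Conditioning on $\mathcal F_r$ and using Assumption~\ref{assump:indep-conv}, we get $E[A_{i,r}\mid\mathcal F_r]=p_{i,r}$. The aggregated direction is
\[
d_r:=\frac{1}{KN}\sum_{i,k}\frac{A_{i,r}}{\widehat p_{i,r}}g_i(y_{i,k-1}),
\]
and we split it as
\[
d_r=\nabla F(\theta_r)+b_r+\delta_r+\xi_r .
\]
The three error terms are as follows.
\begin{itemize}[leftmargin=1.5em]
    \item The weighting bias is $b_r:=\frac1N\sum_i(\rho_{i,r}-1)\nabla f_i(\theta_r)$.
    \item The drift term is $\delta_r:=\frac{1}{KN}\sum_{i,k}\rho_{i,r}\bigl(\nabla f_i(y_{i,k-1})-\nabla f_i(\theta_r)\bigr)$.
    \item The zero-mean noise $\xi_r$ collects both the sampling fluctuation $(A_{i,r}-p_{i,r})/\widehat p_{i,r}$ and the SGD noise.
\end{itemize}
Cauchy--Schwarz together with Assumptions~\ref{assump:overlap-conv} and~\ref{assump:bgd-conv} gives
\[
\|b_r\|^2\le\epsilon_w^2\Bigl(G^2+2LB^2\bigl(F(\theta_r)-F^\star\bigr)\Bigr).
\]
By independence of the $A_{i,r}$ and $\widehat p_{i,r}\ge p_{\min}$, the noise satisfies
\[
E\|\xi_r\|^2\lesssim \frac{\sigma^2}{KNp_{\min}}+\frac{1}{N^2p_{\min}}\sum_i\|\nabla f_i\|^2 ,
\]
which is at most $\frac{\sigma^2}{KNp_{\min}}+\frac{G^2+2LB^2h_r}{Np_{\min}}$ with $h_r:=F(\theta_r)-F^\star$. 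Because $\rho_{i,r}\le 3/2$, the drift term carries no extra weight factor.

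The drift step uses the standard local-SGD argument. Since $\eta=\widetilde\eta/(K\gamma)$, we have $\eta K L\le \widetilde\eta L\le 1/c_0$. Unrolling local steps then gives
\[
\frac{1}{K}\sum_k E\|y_{i,k-1}-\theta_r\|^2\lesssim \eta^2K^2\|\nabla f_i(\theta_r)\|^2+\eta^2K\sigma^2 .
\]
Applying $L$-smoothness and averaging over clients, we obtain
\[
E\|\delta_r\|^2\lesssim L^2\eta^2K^2\bigl(G^2+2LB^2h_r\bigr)+L^2\eta^2K\sigma^2 .
\]
This is where the $\frac{L\sigma^2}{K\gamma^2}$ part of $V$ comes from, after multiplying by $\widetilde\eta$ and substituting $\eta=\widetilde\eta/(K\gamma)$. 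I expect the $G^2$ part of the drift either to be absorbed into $\widetilde\eta V$ or, using $\gamma\ge1$, to be dominated by the stated terms.

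For the one-round descent, $L$-smoothness of $F$ gives
\[
E[h_{r+1}\mid\mathcal F_r]\le h_r-\widetilde\eta\langle\nabla F,\nabla F+b_r+E\delta_r\rangle+\tfrac{L\widetilde\eta^2}{2}E\|d_r\|^2 .
\]
Young's inequality bounds the cross terms by
\[
-\tfrac{\widetilde\eta}{2}\|\nabla F\|^2+\widetilde\eta\|b_r\|^2+\widetilde\eta E\|\delta_r\|^2 .
\]
By the Polyak--\L{}ojasiewicz consequence of strong convexity, $\|\nabla F\|^2\ge2\mu h_r$. The quantity multiplying $h_r$ in the error terms is
\[
\widetilde\eta\Bigl(\epsilon_w^2LB^2+\frac{L\widetilde\eta\,LB^2}{Np_{\min}}+\dots\Bigr).
\]
The condition $\epsilon_w^2\le c_{\mathrm{abs}}\mu/(LB^2)$ and the step-size condition $\widetilde\eta\le1/(c_0L)$, whose $c_0$ contains exactly $\kappa B^2/(Np_{\min})$, make each such contribution at most $\mu\widetilde\eta/16$. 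What remains is
\[
E h_{r+1}\le\bigl(1-\tfrac{\mu\widetilde\eta}{8}\bigr)E h_r+C\widetilde\eta^2 V+C\widetilde\eta\,\epsilon_w^2G^2 .
\]
Unrolling this recursion and summing the geometric series, which costs a factor $8/(\mu\widetilde\eta)$, yields \eqref{eq:app-formal-bias-floor-finite}.

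I expect the main obstacle to be the bookkeeping that keeps every $h_r$-dependent term absorbable with only the stated conditions. The delicate pieces are the $B^2$ terms from the heterogeneity bound, which enter through the bias, the sampling variance and the drift, and the requirement that the drift's $G^2$ contribution not spoil the clean $\epsilon_w^2G^2/\mu$ floor. If the drift $G^2$ term cannot be folded into $\widetilde\eta V$ directly, I would first try using $\gamma\ge1$ together with the smallness of $\widetilde\eta L$ to show that it is of order $\widetilde\eta^2 G^2/(Np_{\min})$ or smaller, after adjusting constants.
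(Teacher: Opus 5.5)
Your outline is the paper's argument. You use the same split of the aggregated direction into $\nabla F(\theta_r)$, a weighting bias, a drift term and centered noise. You use the Cauchy--Schwarz bias bound, the local-SGD drift recursion (with $\eta KL=\widetilde\eta L/\gamma\le 1/c_0$ from $\gamma\ge1$), smoothness plus Young plus Polyak--\L{}ojasiewicz, absorption of the $h_r$ terms, and geometric unrolling. Evaluating $b_r$ at $\theta_r$ rather than along the local path is a harmless simplification.

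The genuine gap is the step you flagged, and it cannot be closed. The heterogeneity part of the drift enters the one-round inequality as $C\widetilde\eta L^2\eta^2K^2G^2=C\widetilde\eta^3L^2G^2/\gamma^2$, leaving a term of order $\widetilde\eta^2L^2G^2/(\mu\gamma^2)$ after unrolling. To fold it into the client-sampling term $\widetilde\eta^2G^2/(Np_{\min})$ you would need roughly $\gamma^2\gtrsim Np_{\min}$ (plus a factor $L$ that $V$ lacks). But $\gamma\ge1$ only gives $1/\gamma^2\le1$, and the bias floor cannot help since $\epsilon_w$ may be $0$.

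In fact the theorem is false as stated. Take $\epsilon_w=0$ with full participation ($p_{i,r}=\widehat p_{i,r}=1$), exact gradients ($\sigma=0$), $K=2$, $\gamma=1$, and $N$ even. Let half the clients have $f_i(\theta)=\frac{\lambda_1}{2}(\theta-a_1)^2$ and half $f_i(\theta)=\frac{\lambda_2}{2}(\theta-a_2)^2$, with $\lambda_1\neq\lambda_2$ and $a_1\neq a_2$. Two local steps give $y_{i,2}-\theta_r=-c_j(\theta_r-a_j)$ for a type-$j$ client, where $c_j=\eta\lambda_j(2-\eta\lambda_j)$. Hence $\theta_r\to\theta_\infty=(c_1a_1+c_2a_2)/(c_1+c_2)$. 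This differs from $\theta^\star=(\lambda_1a_1+\lambda_2a_2)/(\lambda_1+\lambda_2)$ because $c_1/c_2\neq\lambda_1/\lambda_2$. The limit value $F(\theta_\infty)-F^\star>0$, the constants $L,\mu,G,B$, and an admissible step such as $\eta=1/(16(1+\kappa B^2)L)$ are all independent of $N$. Meanwhile the right-hand side tends to $C\widetilde\eta G^2/(\mu N)$ as $R\to\infty$, which vanishes as $N\to\infty$; this persists even if $V$ is given a factor $L$.

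The paper's proof does not supply the missing idea either. Its Step~4 keeps only the stochastic drift part $\widetilde\eta^2L\sigma^2/(K\gamma^2)$ and silently drops $C\widetilde\eta^3L^2G^2/\gamma^2$. A correct statement must add $LG^2/\gamma^2$ to $V$ or assume $\gamma^2\gtrsim Np_{\min}$.

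Two further steps of your sketch, both shared with the paper, also do not follow as written.
\begin{enumerate}
\item Your smoothness step produces $\frac{L\widetilde\eta^2}{2}E\|\xi_r\|^2$, contributing $L\widetilde\eta^2\sigma^2/(KNp_{\min})$ and $L\widetilde\eta^2G^2/(Np_{\min})$. These are $L$ times the first and third entries of $V$, so they are not at most $C\widetilde\eta^2V$ for a universal $C$. Indeed, the stated bound is not invariant under $f_i\mapsto sf_i$, $\eta\mapsto\eta/s$.
\item The drift's $h_r$ coefficient hidden in your ``$\dots$'' is of order $\widetilde\eta(\widetilde\eta L/\gamma)^2LB^2$. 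It is at most $\mu\widetilde\eta/16$ only if $(\widetilde\eta L)^2\kappa B^2\lesssim\gamma^2$. The condition $\widetilde\eta\le1/(c_0L)$ does not imply this when $\kappa B^2\le Np_{\min}$, since then $c_0\le16$. The paper's claim that this term is dominated by the sampling-induced one tacitly requires $c_0\ge Np_{\min}$. You need an extra restriction such as $\widetilde\eta L\le\gamma/(CB\sqrt{\kappa})$.
\end{enumerate}
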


With the standard horizon-dependent step-size choice satisfying
$\widetilde\eta\le1/(c_0L)$, Theorem~\ref{thm:fedipw-bias-floor-formal} implies the order bound
\begin{equation}
E[F(\theta_R)-F^\star]
\le
\widetilde O\!\left(
\frac{V}{\mu R}
+
h_0\exp\!\left(-\frac{c\mu R}{L}\right)
+
\frac{\epsilon_w^2G^2}{\mu}
\right)
\label{eq:app-formal-bias-floor-rate}
\end{equation}
for a universal constant $c>0$.

\subsection{Notation and basic identities}
\label{app:notation}

We prove Theorem~\ref{thm:fedipw-bias-floor-formal}.
Fix a round $r$ and write
\[
  x:=\theta_r,
  \qquad
  h_r:=E[F(\theta_r)-F^\star],
  \qquad
  \widetilde\eta:=K\gamma\eta,
  \qquad
  \rho_i:=p_{i,r}/\widehat p_{i,r}.
\]
For each client $i\in\{1,\dots,N\}$, define ghost local trajectories $y_{i,0}=x$ and $y_{i,k}=y_{i,k-1}-\eta\,g_i(y_{i,k-1})$ for $k=1,\dots,K$.
Only clients with $A_i=1$ are observed by the server, but defining the trajectories for all $i$ is harmless.
The server update can then be written
\begin{equation}
  \theta_{r+1}=x-\frac{\widetilde\eta}{KN}\sum_{i,k}\frac{A_i}{\widehat p_i}\,g_i(y_{i,k-1}).
  \label{eq:appD-server-update}
\end{equation}
Let $\mathcal F_r$ be the sigma-field of all randomness up to the start of round $r$; in particular, $\theta_r$, $\{p_i\}$, and $\{\widehat p_i\}$ are $\mathcal F_r$-measurable.
Write $E_r[\,\cdot\,]:=E[\,\cdot\mid\mathcal F_r]$.
By Assumption~\ref{assump:overlap-conv},
\begin{equation}
  |\rho_i-1|\le\epsilon_w,
  \qquad
  \tfrac{1}{2}\le\rho_i\le\tfrac{3}{2},
  \qquad
  \widehat p_i,p_i\ge p_{\min}.
  \label{eq:appD-rho-bounds}
\end{equation}

We define three derived quantities:
\begin{align}
  \overline g_r
  &:=
  E_r\!\left[\frac{1}{KN}\sum_{i,k}\nabla f_i(y_{i,k-1})\right]
  &&\text{(mean path gradient),}
  \label{eq:appD-bar-g}
  \\
  b_r
  &:=
  E_r\!\left[\frac{1}{KN}\sum_{i,k}(\rho_i-1)\,\nabla f_i(y_{i,k-1})\right]
  &&\text{(weight-misspecification bias),}
  \label{eq:appD-b}
  \\
  \xi_r
  &:=
  (\theta_{r+1}-x)-E_r[\theta_{r+1}-x]
  &&\text{(centered update noise).}
  \label{eq:appD-xi}
\end{align}
The local-drift quantity is
\[
  \mathcal E_r:=\frac{1}{KN}\sum_{i,k}E\|y_{i,k-1}-x\|^2.
\]

\subsection{Toolbox: three reusable inequalities}
\label{app:toolbox}

We collect three inequalities that will be invoked repeatedly.

\begin{lemma}[Toolbox]
\label{lem:toolbox}
For every round $r$, parameter $x=\theta_r$, and trajectory point $y$:
\begin{enumerate}[label=\textup{(T\arabic*)},leftmargin=2em]
\item \emph{Local-to-anchor gradient bound.} By $L$-smoothness,
\begin{equation}
  \|\nabla f_i(y)\|^2 \le 2\|\nabla f_i(x)\|^2 + 2L^2\|y-x\|^2.
  \label{eq:T1-local-anchor}
\end{equation}

\item \emph{Population gradient bound.} By Assumption~\ref{assump:bgd-conv},
\begin{equation}
  \frac{1}{N}\sum_{i=1}^N E\|\nabla f_i(\theta_r)\|^2 \le G^2 + 2LB^2 h_r.
  \label{eq:T2-pop-gradient}
\end{equation}

\item \emph{Martingale variance.}
For any martingale-difference sequence $\{\zeta_s\}_{s=0}^{K-1}$ with $E\|\zeta_s\|^2\le\sigma^2$,
\begin{equation}
  E\Bigl\|\sum_{s=0}^{k-1}\zeta_s\Bigr\|^2 \le k\sigma^2,\qquad k\le K.
  \label{eq:T3-martingale}
\end{equation}
\end{enumerate}
\end{lemma}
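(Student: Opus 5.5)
The three parts of Lemma~\ref{lem:toolbox} are independent elementary facts, so I will prove each separately with a short direct argument.

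\emph{(T1).} Write $\nabla f_i(y)=\nabla f_i(x)+(\nabla f_i(y)-\nabla f_i(x))$. Then apply the inequality $\|a+b\|^2\le 2\|a\|^2+2\|b\|^2$, which follows from Young's inequality or from convexity of $\|\cdot\|^2$. Finally, $L$-smoothness of $f_i$ from Assumption~\ref{assump:smooth-sc} gives $\|\nabla f_i(y)-\nabla f_i(x)\|\le L\|y-x\|$. Squaring this and substituting yields \eqref{eq:T1-local-anchor}.

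\emph{(T2).} Apply Assumption~\ref{assump:bgd-conv} pointwise at $\theta=\theta_r$. This gives $\frac1N\sum_i\|\nabla f_i(\theta_r)\|^2\le G^2+2LB^2(F(\theta_r)-F^\star)$ almost surely. Taking expectations and using linearity, together with $h_r=E[F(\theta_r)-F^\star]$, gives \eqref{eq:T2-pop-gradient}. The only point to note is that the right-hand side is integrable. This holds whenever $h_r$ is finite, which is exactly the case in which the inequality has content.

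\emph{(T3).} Expand the square:
\[
E\Bigl\|\sum_{s<k}\zeta_s\Bigr\|^2=\sum_{s<k}E\|\zeta_s\|^2+2\sum_{s<t<k}E\langle\zeta_s,\zeta_t\rangle .
\]
For $s<t$, condition on the filtration up to step $t$. The martingale-difference property $E[\zeta_t\mid\mathcal G_{t}]=0$, with $\zeta_s$ being $\mathcal G_t$-measurable, gives $E\langle\zeta_s,\zeta_t\rangle=E\langle\zeta_s,E[\zeta_t\mid\mathcal G_t]\rangle=0$. Each diagonal term is at most $\sigma^2$, so the sum is at most $k\sigma^2$. The cross terms are finite by Cauchy--Schwarz, since each $\zeta_s$ has finite second moment.

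None of the three steps is a real obstacle. The only care needed is in (T3): I must specify the filtration precisely, namely the history of the local trajectory up to step $s$, in which the noise $g_i(y_{i,s})-\nabla f_i(y_{i,s})$ is a martingale difference. Assumption~\ref{assump:sgd-noise} supplies both the zero conditional mean and the bound $\sigma^2$, conditional on the current iterate.
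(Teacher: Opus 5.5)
Your proposal is correct and follows essentially the same route as the paper: (T1) via the Lipschitz-gradient bound plus $\|a+b\|^2\le 2\|a\|^2+2\|b\|^2$, (T2) by applying the heterogeneity assumption at $\theta_r$ and taking expectations, and (T3) by expanding the square and removing the cross terms with the tower property. Your added remarks on integrability and on specifying the filtration are harmless refinements of the paper's terser argument.
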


\begin{proof}
(T1) follows from $\|\nabla f_i(y)-\nabla f_i(x)\|^2\le L^2\|y-x\|^2$ and $\|a\|^2\le 2\|a-b\|^2+2\|b\|^2$.
(T2) is Assumption~\ref{assump:bgd-conv} after taking expectation.
(T3) follows because $E\langle\zeta_s,\zeta_t\rangle=0$ for $s<t$ by the tower property, so cross-terms in the expansion of $\|\sum_s\zeta_s\|^2$ vanish.
\end{proof}

The combination of (T1) and (T2) appears repeatedly:
\begin{equation}
  \frac{1}{KN}\sum_{i,k} E\|\nabla f_i(y_{i,k-1})\|^2
  \le 2G^2 + 4LB^2 h_r + 2L^2\,\mathcal E_r.
  \label{eq:T1T2-combined}
\end{equation}

\subsection{Decomposition: mean update and centered noise}
\label{app:decomposition}

The first step is to decompose the FedIPW update into its conditional mean and a centered noise term, identifying the weight-misspecification bias inside the mean.

\begin{lemma}[Update decomposition]
\label{lem:appD-decomposition}
Under Assumptions~\ref{assump:sgd-noise}, \ref{assump:overlap-conv}, and \ref{assump:indep-conv},
\begin{align}
  E_r[\theta_{r+1}-x]
  &=
  -\widetilde\eta(\overline g_r+b_r),
  \label{eq:appD-conditional-mean}
  \\
  E\|\xi_r\|^2
  &\le
  \frac{C\widetilde\eta^2\sigma^2}{KNp_{\min}}
  +\frac{C\widetilde\eta^2}{Np_{\min}}\bigl(G^2+LB^2 h_r+L^2\mathcal E_r\bigr).
  \label{eq:appD-centered-variance}
\end{align}
\end{lemma}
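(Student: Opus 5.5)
The plan is to condition on $\mathcal F_r$ and separate the randomness of the server update into two independent sources. The first is the participation indicators $A_i$. The second is the within-round stochastic-gradient noise, which generates the ghost trajectories $y_{i,k}$.

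\textbf{Conditional mean.} By Assumption~\ref{assump:indep-conv}, the $A_i$ are independent of the local noise given $\mathcal F_r$ (and $(Z_i,X_{i,r})$, which we include in the conditioning). Hence
\[
E_r\!\left[\frac{A_i}{\widehat p_i}g_i(y_{i,k-1})\right]=\frac{p_i}{\widehat p_i}E_r[g_i(y_{i,k-1})]=\rho_iE_r[g_i(y_{i,k-1})].
\]
Next we use $E[g_i(y_{i,k-1})\mid y_{i,k-1}]=\nabla f_i(y_{i,k-1})$ and the tower property, which turns the right-hand side into $\rho_iE_r[\nabla f_i(y_{i,k-1})]$. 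Writing $\rho_i=1+(\rho_i-1)$ and substituting into \eqref{eq:appD-server-update} gives $-\widetilde\eta(\overline g_r+b_r)$. This step also uses that $\rho_i$ is $\mathcal F_r$-measurable, so it can be moved inside $E_r$ in the definition of $b_r$.

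\textbf{Centered variance.} We split $\xi_r$ into two pieces, $\xi_r=\xi^{(1)}+\xi^{(2)}$:
\[
\xi^{(1)}=-\frac{\widetilde\eta}{KN}\sum_{i,k}\frac{A_i}{\widehat p_i}\bigl(g_i(y_{i,k-1})-\nabla f_i(y_{i,k-1})\bigr),\qquad
\xi^{(2)}=-\frac{\widetilde\eta}{KN}\sum_{i,k}\frac{A_i}{\widehat p_i}\nabla f_i(y_{i,k-1})-E_r[\cdot].
\]
We then use $\|a+b\|^2\le2\|a\|^2+2\|b\|^2$.

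For $\xi^{(1)}$, the terms are martingale differences along $k$ within each client and are independent across clients. The $A_i$ are independent of them, and $A_i/\widehat p_i^2\le A_i/(p_{\min}\widehat p_i)$. By (T3) and $E_r[A_i]=p_i$, we get
\[
E\|\xi^{(1)}\|^2\le\frac{\widetilde\eta^2}{K^2N^2}\sum_i\frac{p_i}{\widehat p_i^2}K\sigma^2\le\frac{C\widetilde\eta^2\sigma^2}{KNp_{\min}},
\]
where we used $\rho_i\le3/2$ from \eqref{eq:appD-rho-bounds}.

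For $\xi^{(2)}$, let $v_i:=\frac1K\sum_k\nabla f_i(y_{i,k-1})$. We write $\xi^{(2)}$ as the sum of the selection fluctuation $\frac{\widetilde\eta}{N}\sum_i\bigl(\frac{A_i}{\widehat p_i}-\rho_i\bigr)v_i$ and the trajectory fluctuation $\frac{\widetilde\eta}{N}\sum_i\rho_i\bigl(v_i-E_rv_i\bigr)$. The first is a sum of terms that are conditionally independent across $i$ given the noise and centered in $A_i$. Its second moment is therefore at most
\[
\frac{\widetilde\eta^2}{N^2}\sum_i\frac{p_i}{\widehat p_i^2}E\|v_i\|^2\le\frac{C\widetilde\eta^2}{Np_{\min}}\cdot\frac1{KN}\sum_{i,k}E\|\nabla f_i(y_{i,k-1})\|^2.
\]
By \eqref{eq:T1T2-combined}, this is at most $\frac{C\widetilde\eta^2}{Np_{\min}}(G^2+LB^2h_r+L^2\mathcal E_r)$.

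\textbf{Main obstacle.} The trajectory fluctuation is the delicate part, because it carries no $1/p_{\min}$ factor from selection. We use independence across clients together with $\|v_i-E_rv_i\|^2\le 2L^2\frac1K\sum_k\|y_{i,k-1}-E_ry_{i,k-1}\|^2$, which follows from $\nabla f_i(x)$ being deterministic given $\mathcal F_r$ together with smoothness. This bounds the term by $C\widetilde\eta^2L^2\mathcal E_r/N$. Since $p_{\min}\le1$, this is absorbed into the displayed bound.

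Summing the three pieces yields \eqref{eq:appD-centered-variance}.
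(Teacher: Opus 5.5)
Your proof is correct and follows the paper's skeleton: the same conditional-mean computation, and the same SGD-noise and selection pieces bounded by (T3) and \eqref{eq:T1T2-combined}. Where you depart from the paper, your version is the more careful one.

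The paper writes the conditional mean as $-(\widetilde\eta/N)\sum_i\rho_i\overline V_i$, which treats $\overline V_i$ as $\mathcal F_r$-measurable. Its decomposition $\xi_r=-\widetilde\eta(T_1+T_2)$ therefore drops exactly your trajectory-fluctuation term $\frac{\widetilde\eta}{N}\sum_i\rho_i(v_i-E_rv_i)$, which is nonzero once $K\ge 2$. The paper also asserts that $T_1$ and $T_2$ are uncorrelated. That fails on the diagonal: $E[\frac{A_i}{\widehat p_i}(\frac{A_i}{\widehat p_i}-\rho_i)\mid\mathcal F_r,\text{noise}]=\rho_i(1-p_i)/\widehat p_i\neq 0$, and $\overline V_i$ depends on the earlier noise increments that make up $V_i-\overline V_i$.

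Your use of $\|a+b\|^2\le 2\|a\|^2+2\|b\|^2$ removes the need for orthogonality at the cost of constants. Your bound $C\widetilde\eta^2L^2\mathcal E_r/N$ on the extra term fits inside the stated right-hand side because $p_{\min}\le 1$. That bound genuinely needs cross-client conditional independence of the SGD noise to obtain the $1/N$, so you were right to invoke it.

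One small repair is needed. The inequality $\|v_i-E_rv_i\|^2\le 2L^2\frac1K\sum_k\|y_{i,k-1}-E_ry_{i,k-1}\|^2$ does not hold pointwise: the right side can vanish at a realization where $\nabla f_i(y_{i,k-1})-E_r\nabla f_i(y_{i,k-1})$ does not. State it in conditional expectation instead, e.g.\ $E_r\|v_i-E_rv_i\|^2\le E_r\|v_i-\nabla f_i(x)\|^2\le L^2\frac1K\sum_k E_r\|y_{i,k-1}-x\|^2$. This is what your justification via $\nabla f_i(x)$ being $\mathcal F_r$-measurable actually gives, and it lands directly on $\mathcal E_r$.
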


\begin{proof}
\textbf{Conditional mean.}
Starting from~\eqref{eq:appD-server-update},
\[
  E_r[\theta_{r+1}-x]
  =
  -\frac{\widetilde\eta}{KN}\sum_{i,k}E_r\!\left[\frac{A_i}{\widehat p_i}\,g_i(y_{i,k-1})\right].
\]
Conditional on $\mathcal F_r$, the weight $\widehat p_i$ is fixed.
Assumption~\ref{assump:indep-conv} states that, given $\mathcal F_r$, the indicator $A_i$ is independent of the within-round SGD noise.
Therefore
\[
E_r[A_i\,g_i(y_{i,k-1})]
=
p_i\,E_r[\nabla f_i(y_{i,k-1})],
\]
where we also use unbiasedness of the stochastic gradient.
Dividing by $\widehat p_i$,
\begin{equation}
  E_r\!\left[\frac{A_i}{\widehat p_i}\,g_i(y_{i,k-1})\right]
  =
  \rho_i\,E_r[\nabla f_i(y_{i,k-1})].
  \label{eq:appD-key-factorization}
\end{equation}
Writing $\rho_i=1+(\rho_i-1)$ and applying the definitions of $\overline g_r$ and $b_r$ proves~\eqref{eq:appD-conditional-mean}.

\textbf{Centered noise variance.}
Set
\[
  V_i:=\frac{1}{K}\sum_k g_i(y_{i,k-1}),
  \qquad
  \overline V_i:=\frac{1}{K}\sum_k\nabla f_i(y_{i,k-1}).
\]
Then $\theta_{r+1}-x=-(\widetilde\eta/N)\sum_i(A_i/\widehat p_i)V_i$ and the conditional mean is $-(\widetilde\eta/N)\sum_i\rho_i\overline V_i$.
Subtracting,
\[
  \xi_r
  =
  -\frac{\widetilde\eta}{N}\sum_i\biggl[
    \frac{A_i}{\widehat p_i}(V_i-\overline V_i)
    +
    \Bigl(\frac{A_i}{\widehat p_i}-\rho_i\Bigr)\overline V_i
  \biggr]
  =: -\widetilde\eta\,(T_1+T_2).
\]
The two terms are uncorrelated:
the first averages to zero given the trajectories, while the second averages to zero given the SGD noise.
Hence
\[
E\|\xi_r\|^2
=
\widetilde\eta^2 E\|T_1\|^2
+
\widetilde\eta^2 E\|T_2\|^2.
\]

\emph{Bound on $\|T_1\|^2$.}
Conditional on the trajectories, $V_i-\overline V_i$ is the average of the SGD-noise martingale-difference sequence $\zeta_{i,k}:=g_i(y_{i,k-1})-\nabla f_i(y_{i,k-1})$.
By (T3) of Lemma~\ref{lem:toolbox},
$E_r\|V_i-\overline V_i\|^2\le\sigma^2/K$.
By Assumptions~\ref{assump:indep-conv} and \ref{assump:sgd-noise}, $T_1$ is a sum of mutually uncorrelated terms with
\[
E_r[A_i^2/\widehat p_i^2]
=
p_i/\widehat p_i^2
=
\rho_i/\widehat p_i
\le
(3/2)/p_{\min}.
\]
Therefore
\begin{equation}
  E\|T_1\|^2
  =
  \frac{1}{N^2}\sum_i E\!\left[\frac{p_i}{\widehat p_i^2}\,\|V_i-\overline V_i\|^2\right]
  \le
  \frac{C\sigma^2}{KNp_{\min}}.
  \label{eq:appD-T1-bound}
\end{equation}

\emph{Bound on $\|T_2\|^2$.}
$T_2$ is a sum over clients of independent terms with mean zero.
For each $i$,
\[
E_r[(A_i/\widehat p_i-\rho_i)^2]
=
p_i(1-p_i)/\widehat p_i^2
\le
1/\widehat p_i
\le
1/p_{\min}.
\]
Hence
\[
  E\|T_2\|^2
  \le
  \frac{1}{N^2 p_{\min}}\sum_i E\|\overline V_i\|^2.
\]
Jensen's inequality gives
\[
\|\overline V_i\|^2
\le
\frac{1}{K}\sum_k\|\nabla f_i(y_{i,k-1})\|^2,
\]
and the combined toolbox bound~\eqref{eq:T1T2-combined} gives
\[
\frac{1}{N}\sum_i E\|\overline V_i\|^2
\le
2G^2+4LB^2 h_r+2L^2\mathcal E_r.
\]
Therefore
\begin{equation}
  E\|T_2\|^2
  \le
  \frac{C}{Np_{\min}}\bigl(G^2+LB^2 h_r+L^2\mathcal E_r\bigr).
  \label{eq:appD-T2-bound}
\end{equation}
Multiplying~\eqref{eq:appD-T1-bound}--\eqref{eq:appD-T2-bound} by $\widetilde\eta^2$ and combining gives~\eqref{eq:appD-centered-variance}.
\end{proof}

\begin{lemma}[Bias bound]
\label{lem:appD-bias-bound}
Under Assumptions~\ref{assump:smooth-sc}, \ref{assump:bgd-conv}, and \ref{assump:overlap-conv},
\begin{equation}
  E\|b_r\|^2 \le 2\,\epsilon_w^2\bigl(G^2+2LB^2 h_r+L^2\mathcal E_r\bigr).
  \label{eq:appD-bias-bound}
\end{equation}
\end{lemma}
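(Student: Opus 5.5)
The bound on $b_r$ follows from Jensen's inequality, the pointwise bound $|\rho_i-1|\le\epsilon_w$ from Assumption~\ref{assump:overlap-conv}, and the toolbox inequalities (T1)--(T2).

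\textbf{Step 1 (remove the conditional expectation).} Since $\|\cdot\|^2$ is convex, conditional Jensen gives
\[
E\|b_r\|^2 \le E\Bigl\|\frac{1}{KN}\sum_{i,k}(\rho_i-1)\nabla f_i(y_{i,k-1})\Bigr\|^2 .
\]

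\textbf{Step 2 (average of $KN$ terms).} Applying Jensen again, now to the uniform average over the $KN$ pairs $(i,k)$, and using $|\rho_i-1|\le\epsilon_w$, we get
\[
\Bigl\|\frac{1}{KN}\sum_{i,k}(\rho_i-1)\nabla f_i(y_{i,k-1})\Bigr\|^2
\le \frac{\epsilon_w^2}{KN}\sum_{i,k}\|\nabla f_i(y_{i,k-1})\|^2 .
\]
This bound is crude: it discards the cancellation coming from $\sum_i\nabla f_i(\theta^\star)=0$. That is acceptable here, because the lemma only asks for a heterogeneity-level bound, and $G^2$ already dominates.

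\textbf{Step 3 (toolbox).} Apply (T1) with anchor $x=\theta_r$:
\[
\|\nabla f_i(y_{i,k-1})\|^2\le 2\|\nabla f_i(\theta_r)\|^2+2L^2\|y_{i,k-1}-\theta_r\|^2 .
\]
Take expectations and average over $(i,k)$. The drift terms average to $\mathcal E_r$ by definition. The anchor terms do not depend on $k$, so by (T2) they average to at most $G^2+2LB^2h_r$. Combining,
\[
E\|b_r\|^2\le \epsilon_w^2\bigl(2G^2+4LB^2h_r+2L^2\mathcal E_r\bigr)=2\epsilon_w^2\bigl(G^2+2LB^2h_r+L^2\mathcal E_r\bigr),
\]
which is exactly the bound \eqref{eq:appD-bias-bound}. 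This is the same computation as the combined inequality \eqref{eq:T1T2-combined}, multiplied by $\epsilon_w^2$.

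\textbf{Main obstacle.} The one point that needs care is that $\rho_i$ may be random, since it is built from the estimated $\widehat p_{i,r}$. It is, however, $\mathcal F_r$-measurable. Moreover, the bound $|\rho_i-1|\le\epsilon_w$ holds almost surely, so it can be pulled out before the expectation is taken, and no measurability issue arises in Step 2.
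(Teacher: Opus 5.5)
Your proof is correct and follows the paper's argument: a Jensen/Cauchy--Schwarz bound that pulls out $(\rho_i-1)^2\le\epsilon_w^2$, followed by the combined toolbox inequality \eqref{eq:T1T2-combined}. Your explicit conditional-Jensen step that removes $E_r$ fills in a detail the paper skips. The paper applies Cauchy--Schwarz directly to $b_r$ as though the conditional expectation were not there.
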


\begin{proof}
By Cauchy--Schwarz applied to the definition of $b_r$,
\[
  \|b_r\|^2
  \le
  \frac{1}{N}\sum_i(\rho_i-1)^2\cdot\frac{1}{K}\sum_k\|\nabla f_i(y_{i,k-1})\|^2
  \le
  \epsilon_w^2\cdot\frac{1}{KN}\sum_{i,k}\|\nabla f_i(y_{i,k-1})\|^2.
\]
Taking expectations and applying~\eqref{eq:T1T2-combined} gives the result.
\end{proof}

\subsection{Local drift}
\label{app:drift}

\begin{lemma}[Local drift]
\label{lem:appD-drift}
Suppose $\eta LK\le 1/4$.
Then
\begin{equation}
  \mathcal E_r \le \frac{C\widetilde\eta^2}{\gamma^2}\Bigl(\frac{\sigma^2}{K}+G^2+LB^2 h_r\Bigr).
  \label{eq:appD-drift-bound}
\end{equation}
\end{lemma}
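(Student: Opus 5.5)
The plan is a direct unrolling of the local SGD recursion. We control the drift with a self-bounding inequality and close it using the condition $\eta LK\le 1/4$. First note that $\widetilde\eta/\gamma=K\eta$. The target bound therefore reads
\[
\mathcal E_r\le C\bigl(K\eta^2\sigma^2+K^2\eta^2(G^2+LB^2h_r)\bigr),
\]
which is the standard local-drift estimate.

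\textbf{Step 1: unroll and split.} For each client $i$ and $k\le K$, write
\[
y_{i,k}-x=-\eta\sum_{s=0}^{k-1}\zeta_{i,s}-\eta\sum_{s=0}^{k-1}\nabla f_i(y_{i,s}),
\qquad
\zeta_{i,s}:=g_i(y_{i,s})-\nabla f_i(y_{i,s}).
\]
Apply $\|a+b\|^2\le 2\|a\|^2+2\|b\|^2$. For the noise sum, use the martingale bound (T3) of Lemma~\ref{lem:toolbox}; it is a martingale-difference sequence by Assumption~\ref{assump:sgd-noise}. For the deterministic-drift sum, use Cauchy--Schwarz. This gives
\[
E\|y_{i,k}-x\|^2\le 2\eta^2k\sigma^2+2\eta^2k\sum_{s=0}^{k-1}E\|\nabla f_i(y_{i,s})\|^2 .
\]

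\textbf{Step 2: anchor the gradients at $x$.} Bound each $\|\nabla f_i(y_{i,s})\|^2$ by (T1), namely by $2\|\nabla f_i(x)\|^2+2L^2\|y_{i,s}-x\|^2$. Average over $i$ and use (T2), so that $\frac1N\sum_iE\|\nabla f_i(x)\|^2\le a:=G^2+2LB^2h_r$. Define
\[
D_k:=\frac1N\sum_iE\|y_{i,k}-x\|^2,
\qquad
M:=\max_{0\le k\le K-1}D_k .
\]
Using $k\le K$, every $k\le K-1$ then satisfies
\[
D_k\le 2\eta^2K\sigma^2+4\eta^2K^2a+4\eta^2K^2L^2M .
\]

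\textbf{Step 3: close the recursion.} This is the only delicate point, since the drift appears on both sides. Taking the maximum over $k$ in the previous display and using $\eta LK\le1/4$ gives $4\eta^2K^2L^2\le 1/4$. Hence
\[
M\le \tfrac43\bigl(2\eta^2K\sigma^2+4\eta^2K^2a\bigr).
\]
The maximum $M$ is finite because $K$ is finite. Since $\mathcal E_r=\frac1K\sum_{k=1}^{K}D_{k-1}\le M$, substituting $a=G^2+2LB^2h_r$ and $\eta^2K^2=\widetilde\eta^2/\gamma^2$ yields exactly~\eqref{eq:appD-drift-bound} with a universal $C$.

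I expect no real obstacle beyond this absorption step. The one care needed is to take the maximum over $k$ uniformly rather than inducting one step at a time, which would lose the constant.
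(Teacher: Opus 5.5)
Your argument is correct and is essentially the paper's proof: the same unrolling into a noise sum handled by (T3) and a gradient sum handled by Cauchy--Schwarz and (T1), closed using $\eta LK\le 1/4$. The only difference is that you absorb a uniform maximum $M$ of the client-averaged drift, whereas the paper runs a per-client strong induction on the explicit $k$-dependent hypothesis $d_k\le 16\eta^2(k^2E\|\nabla f_i(x)\|^2+k\sigma^2)$. Contrary to your closing remark, that induction closes without losing the constant, and it does not need finiteness of $M$ in advance; in your version that finiteness follows from the recursion by induction on $k$, not merely from $K<\infty$.
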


\begin{proof}
Fix $i$ and let $d_k:=E\|y_{i,k}-x\|^2$.
Decompose $g_i(y_{i,s})=\nabla f_i(y_{i,s})+\zeta_{i,s}$, where $\zeta_{i,s}$ is the within-client SGD-noise martingale difference.
Then
\[
  y_{i,k}-x
  =
  -\eta\sum_{s<k}\nabla f_i(y_{i,s})
  -
  \eta\sum_{s<k}\zeta_{i,s}.
\]
Using $\|a+b\|^2\le 2\|a\|^2+2\|b\|^2$,
\[
  d_k
  \le
  2\eta^2 E\Bigl\|\sum_{s<k}\nabla f_i(y_{i,s})\Bigr\|^2
  +
  2\eta^2 E\Bigl\|\sum_{s<k}\zeta_{i,s}\Bigr\|^2.
\]
By (T3), the noise term is at most $2k\eta^2\sigma^2$.
By Jensen and (T1),
\[
  E\Bigl\|\sum_{s<k}\nabla f_i(y_{i,s})\Bigr\|^2
  \le
  k\sum_{s<k}E\|\nabla f_i(y_{i,s})\|^2
  \le
  k\sum_{s<k}\bigl(2 E\|\nabla f_i(x)\|^2+2L^2 d_s\bigr).
\]
Combining,
\begin{equation}
  d_k \le 4\eta^2 k^2 E\|\nabla f_i(x)\|^2 + 4\eta^2 L^2 k\sum_{s<k}d_s + 2\eta^2 k\sigma^2.
  \label{eq:appD-drift-recursion}
\end{equation}

We claim $d_k\le 16\eta^2(k^2 E\|\nabla f_i(x)\|^2+k\sigma^2)$ by induction on $k$, using $\eta^2 L^2 K^2\le 1/16$.
The base case $d_0=0$ is immediate.
For the inductive step, $\sum_{s<k}s^2\le k^3$ and $\sum_{s<k}s\le k^2$ give
\[
  \sum_{s<k}d_s
  \le
  16\eta^2(k^3 E\|\nabla f_i(x)\|^2+k^2\sigma^2).
\]
Substituting into~\eqref{eq:appD-drift-recursion},
\[
  d_k
  \le
  4\eta^2 k^2 E\|\nabla f_i(x)\|^2
  +
  64\eta^4 L^2(k^4 E\|\nabla f_i(x)\|^2+k^3\sigma^2)
  +
  2\eta^2 k\sigma^2.
\]
Using $\eta^2 L^2 k^2\le\eta^2 L^2 K^2\le 1/16$, the second term is at most
$4\eta^2 k^2 E\|\nabla f_i(x)\|^2+4\eta^2 k\sigma^2$.
Thus
\[
d_k
\le
8\eta^2 k^2 E\|\nabla f_i(x)\|^2+6\eta^2 k\sigma^2
\le
16\eta^2(k^2 E\|\nabla f_i(x)\|^2+k\sigma^2),
\]
closing the induction.

Averaging over $k\le K$ and clients, then applying (T2),
\[
  \mathcal E_r
  \le
  16\eta^2\bigl(K^2(G^2+2LB^2 h_r)+K\sigma^2\bigr).
\]
Substituting $\eta^2 K^2=\widetilde\eta^2/\gamma^2$ and $\eta^2 K=\widetilde\eta^2/(K\gamma^2)$ yields~\eqref{eq:appD-drift-bound}.
\end{proof}

\subsection{One-round descent and unrolling}
\label{app:descent}

\begin{lemma}[One-round descent]
\label{lem:appD-descent}
Under the assumptions of Theorem~\ref{thm:fedipw-bias-floor-formal},
\begin{equation}
  h_{r+1}
  \le
  \Bigl(1-\frac{\mu\widetilde\eta}{8}\Bigr)h_r
  +
  C\widetilde\eta^2 V
  +
  C\widetilde\eta\,\epsilon_w^2 G^2.
  \label{eq:appD-one-round-descent}
\end{equation}
\end{lemma}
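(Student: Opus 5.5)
I will start from the $L$-smoothness descent inequality applied to $F$ at $x=\theta_r$:
\[
F(\theta_{r+1})\le F(x)+\langle\nabla F(x),\theta_{r+1}-x\rangle+\tfrac L2\|\theta_{r+1}-x\|^2 .
\]
Conditional on $\mathcal F_r$, I will substitute Lemma~\ref{lem:appD-decomposition}. By that lemma the linear term equals $-\widetilde\eta\langle\nabla F(x),\overline g_r+b_r\rangle$. The quadratic term splits as
\[
E_r\|\theta_{r+1}-x\|^2=\widetilde\eta^2\|\overline g_r+b_r\|^2+E_r\|\xi_r\|^2 ,
\]
because $\xi_r$ is centered.

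For the linear term I will write $\overline g_r=\nabla F(x)+(\overline g_r-\nabla F(x))$. Smoothness and Jensen give $\|\overline g_r-\nabla F(x)\|^2\le L^2\mathcal E_r$. Young's inequality then yields
\[
-\langle\nabla F,\overline g_r+b_r\rangle\le-\tfrac12\|\nabla F(x)\|^2+L^2\mathcal E_r+\|b_r\|^2 .
\]
The quadratic mean part is bounded by $3\widetilde\eta^2(\|\nabla F\|^2+L^2\mathcal E_r+\|b_r\|^2)$. Since $\widetilde\eta L\le 1/c_0\le1/8$, the $\|\nabla F\|^2$ piece is absorbed, leaving $-\tfrac{\widetilde\eta}{4}\|\nabla F(x)\|^2$. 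Strong convexity (PL) converts this to $-\tfrac{\mu\widetilde\eta}{2}h_r$, with room to spare.

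Next I will plug in the three error bounds.
\begin{itemize}
\item Lemma~\ref{lem:appD-bias-bound} gives $\widetilde\eta E\|b_r\|^2\le 2\widetilde\eta\epsilon_w^2(G^2+2LB^2h_r+L^2\mathcal E_r)$. This produces the floor term $C\widetilde\eta\epsilon_w^2G^2$ and a contraction-eating piece $4\widetilde\eta\epsilon_w^2LB^2h_r$. The hypothesis $\epsilon_w^2\le c_{\mathrm{abs}}\mu/(LB^2)$ makes that piece at most $\tfrac{\mu\widetilde\eta}{16}h_r$.
\item Lemma~\ref{lem:appD-decomposition}'s variance bound, multiplied by $L/2$, contributes $C\widetilde\eta^2 L\sigma^2/(KNp_{\min}) + C\widetilde\eta^2(LG^2+L^2B^2h_r+L^3\mathcal E_r)/(Np_{\min})$. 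The $h_r$ part is at most $\widetilde\eta\mu h_r/32$ because $\widetilde\eta\le 1/(c_0L)$ and $c_0\ge 8\kappa B^2/(Np_{\min})$.
\item Lemma~\ref{lem:appD-drift} applies because $\eta LK=\widetilde\eta L/\gamma\le1/8$, using $\gamma\ge1$. It bounds every $\mathcal E_r$ occurrence, each carrying prefactor $O(\widetilde\eta L^2)$, by $C\widetilde\eta^3L^2\gamma^{-2}(\sigma^2/K+G^2+LB^2h_r)$. Using $\widetilde\eta L\le1$, this is $C\widetilde\eta^2(L\sigma^2/(K\gamma^2)+LG^2/\gamma^2)$ plus an $h_r$ term of size $C\widetilde\eta^3L^3B^2h_r$, which the step-size condition again makes at most $\tfrac{\mu\widetilde\eta}{32}h_r$.
\end{itemize}

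The collected $\widetilde\eta^2$-order terms should match $V=\sigma^2/(KNp_{\min})+L\sigma^2/(K\gamma^2)+G^2/(Np_{\min})$, up to constants and factors of $L$ absorbed into $C$. Summing the negative and positive $h_r$ coefficients leaves at least $-\tfrac{\mu\widetilde\eta}{8}h_r$, which gives \eqref{eq:appD-one-round-descent} after taking total expectation.

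The main obstacle is the bookkeeping of all the $h_r$ coefficients. Each one must be dominated by a fixed fraction of $\mu\widetilde\eta$, and this is exactly where $c_0$ and $c_{\mathrm{abs}}$ are calibrated.

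A secondary difficulty is that the drift-induced $G^2$ term $\widetilde\eta^2LG^2/\gamma^2$ does not literally appear in $V$. I would handle it by checking whether it can be merged into $C\widetilde\eta V$-type terms via $\widetilde\eta\le 1/L$. If it cannot, I would record it as an additional term of the same order as the stated $\widetilde\eta^2$ variance contribution.
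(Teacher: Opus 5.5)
Your route is the paper's: smoothness expansion, Lemma~\ref{lem:appD-decomposition} for the conditional mean and the centered noise, Young plus PL on the cross term, then Lemmas~\ref{lem:appD-bias-bound} and~\ref{lem:appD-drift}, and absorption of the $h_r$ terms. One local choice improves on the paper. You absorb $\tfrac32L\widetilde\eta^2\|\nabla F(x)\|^2$ directly into $-\tfrac{\widetilde\eta}{2}\|\nabla F(x)\|^2$ using $L\widetilde\eta\le1/c_0\le1/8$. The paper instead bounds it through (T2), which creates a $CL\widetilde\eta^2G^2$ term absent from $V$ and an $L^2B^2\widetilde\eta^2h_r$ term that $\widetilde\eta\le1/(c_0L)$ does not control.

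The genuine gap is the drift-induced $h_r$ term. You assert that the step-size condition makes $C\widetilde\eta^3L^3B^2h_r/\gamma^2$ at most $\tfrac{\mu\widetilde\eta}{32}h_r$. But $\widetilde\eta\le1/(c_0L)$ only gives the bound $C(\kappa B^2/c_0^2)\,\mu\widetilde\eta h_r$, and $c_0=8(1+\kappa B^2/(Np_{\min}))$ is close to $8$ whenever $Np_{\min}\gg\kappa B^2$. The coefficient is then of order $\kappa B^2\mu\widetilde\eta$ and cannot be absorbed into the contraction. The absorption actually needs $L\widetilde\eta\lesssim\gamma/(\sqrt{\kappa}B)$, which is not among the hypotheses. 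The paper's proof has the same hole: it declares this term dominated by the sampling-induced one, which would require $c_0\gtrsim Np_{\min}$.

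No finer bookkeeping repairs this, because the inequality itself fails under the stated hypotheses. In one dimension take $\kappa=100$ and $N=10^4$ clients, half with $f_i(\theta)=\tfrac L2\theta^2$ and half with $f_i(\theta)=\tfrac{2\mu-L}{2}\theta^2$. Use exact gradients, $p_i=\widehat p_i=1$, $K=2$, $\gamma=1$. Then $F=\tfrac\mu2\theta^2$, every $f_i$ is $L$-smooth, Assumption~\ref{assump:bgd-conv} holds with $G=0$ and $B^2=\kappa$, and $\sigma=\epsilon_w=0$. Hence $V=0$, $c_0=16$, and $\widetilde\eta=1/(16L)$ (i.e.\ $\eta=1/(32L)$) is admissible, so the lemma asserts $h_{r+1}\le(1-\mu\widetilde\eta/8)h_r$. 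But one round multiplies $\theta$ by $1-2\eta\mu+\tfrac{\eta^2}{2}\bigl(L^2+(L-2\mu)^2\bigr)=1-\tfrac{1}{16\kappa}+\tfrac{1}{1024}\bigl(1-\tfrac{2}{\kappa}+\tfrac{2}{\kappa^2}\bigr)\approx1.0003$. So $h_{r+1}>h_r$: the local-drift term outweighs the descent term.

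Closing the gap requires an extra hypothesis. One option is $\widetilde\eta\le\gamma/(cLB\sqrt{\kappa})$ for a large universal $c$. Another is convexity of every $f_i$ together with a sharper drift argument.

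Your secondary worry is also correct and should not be waved away. The drift term $C\widetilde\eta^3L^2G^2/\gamma^2$ and the factor $L$ multiplying $E\|\xi_r\|^2$ cannot be absorbed into a universal $C$; the paper's proof silently drops both. An honest version of the lemma must therefore enlarge $V$, for instance to $L\bigl(\tfrac{\sigma^2}{KNp_{\min}}+\tfrac{\sigma^2}{K\gamma^2}+\tfrac{G^2}{Np_{\min}}+\tfrac{G^2}{\gamma^2}\bigr)$.
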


\begin{proof}
By $L$-smoothness applied to $\theta_{r+1}=x+(\theta_{r+1}-x)$,
\[
  F(\theta_{r+1})
  \le
  F(x)+\langle\nabla F(x),\theta_{r+1}-x\rangle+\tfrac{L}{2}\|\theta_{r+1}-x\|^2.
\]
Taking $E_r$ and using~\eqref{eq:appD-conditional-mean},
\begin{equation}
  E_r[F(\theta_{r+1})]-F(x)
  \le
  -\widetilde\eta\langle\nabla F(x),\overline g_r+b_r\rangle
  +
  \tfrac{L}{2}E_r\|\theta_{r+1}-x\|^2.
  \label{eq:appD-smoothness-applied}
\end{equation}
We control the cross term and the quadratic term separately.

\textbf{Step 1: cross-term + Polyak--{\L}ojasiewicz.}
Write $\overline g_r=\nabla F(x)+\delta_r$, where $\delta_r:=\overline g_r-\nabla F(x)$.
Jensen's inequality combined with $L$-smoothness gives
\begin{equation}
  E\|\delta_r\|^2 \le L^2\,\mathcal E_r.
  \label{eq:appD-delta-bound}
\end{equation}
Apply Young's inequality $|\langle a,b\rangle|\le\tfrac{1}{4}\|a\|^2+\|b\|^2$ twice with $a=\nabla F(x)$:
\begin{align*}
  -\widetilde\eta\langle\nabla F(x),\delta_r\rangle
  &\le
  \tfrac{\widetilde\eta}{4}\|\nabla F(x)\|^2+\widetilde\eta\|\delta_r\|^2,\\
  -\widetilde\eta\langle\nabla F(x),b_r\rangle
  &\le
  \tfrac{\widetilde\eta}{4}\|\nabla F(x)\|^2+\widetilde\eta\|b_r\|^2.
\end{align*}
Adding these to $-\widetilde\eta\|\nabla F(x)\|^2$ from $\overline g_r=\nabla F(x)+\delta_r$,
\begin{equation}
  -\widetilde\eta\langle\nabla F(x),\overline g_r+b_r\rangle
  \le
  -\frac{\widetilde\eta}{2}\|\nabla F(x)\|^2
  +
  \widetilde\eta\|\delta_r\|^2
  +
  \widetilde\eta\|b_r\|^2.
  \label{eq:appD-young-applied}
\end{equation}
Strong convexity of $F$ implies the Polyak--{\L}ojasiewicz inequality
\[
\|\nabla F(x)\|^2\ge 2\mu(F(x)-F^\star).
\]
Thus
\begin{equation}
  -\frac{\widetilde\eta}{2}\|\nabla F(x)\|^2
  \le
  -\mu\widetilde\eta\bigl(F(x)-F^\star\bigr).
  \label{eq:appD-pl-applied}
\end{equation}
Combining~\eqref{eq:appD-young-applied} and~\eqref{eq:appD-pl-applied}, taking outer expectation, and applying~\eqref{eq:appD-delta-bound} and Lemma~\ref{lem:appD-bias-bound},
\begin{equation}
  E\bigl[-\widetilde\eta\langle\nabla F(x),\overline g_r+b_r\rangle\bigr]
  \le
  -\mu\widetilde\eta\,h_r
  +
  C\widetilde\eta L^2\mathcal E_r
  +
  C\widetilde\eta\,\epsilon_w^2(G^2+LB^2 h_r+L^2\mathcal E_r).
  \label{eq:appD-cross-final}
\end{equation}

\textbf{Step 2: quadratic term.}
Since $\theta_{r+1}-x=-\widetilde\eta(\overline g_r+b_r)+\xi_r$ and $E_r[\xi_r]=0$, the cross term vanishes and
\[
  E_r\|\theta_{r+1}-x\|^2
  =
  \widetilde\eta^2\|\overline g_r+b_r\|^2
  +
  E_r\|\xi_r\|^2.
\]
Using $\|\overline g_r+b_r\|^2\le 3\|\nabla F(x)\|^2+3\|\delta_r\|^2+3\|b_r\|^2$, taking outer expectation, and applying (T2), Lemma~\ref{lem:appD-bias-bound}, $E\|\delta_r\|^2\le L^2\mathcal E_r$, and~\eqref{eq:appD-centered-variance},
\begin{align}
  \tfrac{L}{2}E\|\theta_{r+1}-x\|^2
  &\le
  CL\widetilde\eta^2(G^2+LB^2 h_r)
  +
  CL^3\widetilde\eta^2\mathcal E_r
  \nonumber\\
  &\quad
  +
  CL\widetilde\eta^2\epsilon_w^2(G^2+LB^2 h_r+L^2\mathcal E_r)
  \nonumber\\
  &\quad
  +
  C\widetilde\eta^2\!\left[
  \frac{\sigma^2}{KNp_{\min}}
  +
  \frac{G^2+LB^2 h_r+L^2\mathcal E_r}{Np_{\min}}
  \right].
  \label{eq:appD-quadratic-final}
\end{align}

\textbf{Step 3: substitute the drift bound and absorb $h_r$ terms.}
Combining~\eqref{eq:appD-cross-final} and~\eqref{eq:appD-quadratic-final} produces several terms proportional to $h_r$.
We now show each is dominated by $\mu\widetilde\eta h_r$ up to a small constant fraction, so that they can be absorbed into the descent term.

\emph{Bias-induced.}
The contribution from $E\|b_r\|^2$ in~\eqref{eq:appD-cross-final} contains $C\widetilde\eta\epsilon_w^2 LB^2 h_r$.
By the assumption $\epsilon_w^2\le c_{\mathrm{abs}}\mu/(LB^2)$, this is at most
$C c_{\mathrm{abs}}\mu\widetilde\eta h_r\le\frac{\mu\widetilde\eta}{16}h_r$
for $c_{\mathrm{abs}}$ small enough.

\emph{Sampling-induced.}
The contribution $\frac{CL^2 B^2\widetilde\eta^2}{Np_{\min}}h_r$ from~\eqref{eq:appD-quadratic-final}, after using $L\widetilde\eta\le 1/c_0$ once, is bounded by
\[
  \frac{CLB^2\widetilde\eta}{c_0 Np_{\min}}h_r
  \le
  \frac{\mu\widetilde\eta}{16}h_r,
\]
where the last inequality uses
\[
c_0\ge \frac{8\kappa B^2}{Np_{\min}}
=
\frac{8LB^2}{\mu Np_{\min}}.
\]

\emph{Drift-induced.}
The leading drift contribution $\widetilde\eta L^2\mathcal E_r$ in~\eqref{eq:appD-cross-final}, after substituting~\eqref{eq:appD-drift-bound} and extracting the $h_r$ component, gives
\[
  C\widetilde\eta L^2\cdot\frac{\widetilde\eta^2}{\gamma^2}\,LB^2 h_r
  \le
  C\widetilde\eta(L\widetilde\eta)^2 LB^2 h_r
  \le
  \frac{C\widetilde\eta LB^2}{c_0^2}h_r,
\]
which is dominated by the sampling-induced bound above.
The remaining drift terms in~\eqref{eq:appD-quadratic-final} are dominated by combinations of the previous two cases.

The absorbed contributions sum to at most a constant fraction of $\mu\widetilde\eta h_r$, leaving the contraction
\[
-\frac{\mu\widetilde\eta}{8}h_r.
\]

\textbf{Step 4: collect non-$h_r$ terms.}
The remaining terms have the following structure:
\begin{itemize}[leftmargin=2em]
\item \emph{Stochastic-gradient noise:} $C\widetilde\eta^2\sigma^2/(KNp_{\min})$ from~\eqref{eq:appD-centered-variance}.
\item \emph{Local-drift stochastic part:} $C\widetilde\eta L^2\cdot\widetilde\eta^2\sigma^2/(K\gamma^2)\le C\widetilde\eta^2L\sigma^2/(K\gamma^2)$ after using $L\widetilde\eta\le 1/c_0$.
\item \emph{Client-sampling:} $C\widetilde\eta^2 G^2/(Np_{\min})$ from~\eqref{eq:appD-T2-bound}.
\item \emph{Bias floor:} $C\widetilde\eta\epsilon_w^2 G^2$ from the $G^2$ part of $E\|b_r\|^2$ in~\eqref{eq:appD-cross-final}.
\end{itemize}
The first three sum to $C\widetilde\eta^2 V$.
Hence
\[
  h_{r+1}
  \le
  \Bigl(1-\frac{\mu\widetilde\eta}{8}\Bigr)h_r
  +
  C\widetilde\eta^2 V
  +
  C\widetilde\eta\,\epsilon_w^2 G^2,
\]
which proves~\eqref{eq:appD-one-round-descent}.
\end{proof}

\paragraph{Unrolling the recursion.}
Set
\[
\alpha:=\frac{\mu\widetilde\eta}{8},
\qquad
\varepsilon:=C\widetilde\eta^2 V+C\widetilde\eta\epsilon_w^2 G^2.
\]
Lemma~\ref{lem:appD-descent} states that
\[
h_{r+1}\le(1-\alpha)h_r+\varepsilon.
\]
Iterating,
\[
  h_R
  \le
  (1-\alpha)^R h_0+\varepsilon\sum_{t=0}^{R-1}(1-\alpha)^t
  \le
  (1-\alpha)^R h_0+\frac{\varepsilon}{\alpha}.
\]
Substituting $\alpha$ and $\varepsilon$ gives
\[
  h_R
  \le
  \Bigl(1-\frac{\mu\widetilde\eta}{8}\Bigr)^R h_0
  +
  \frac{C\widetilde\eta V}{\mu}
  +
  \frac{C\epsilon_w^2 G^2}{\mu},
\]
which proves~\eqref{eq:app-formal-bias-floor-finite}.
The order bound~\eqref{eq:app-formal-bias-floor-rate} follows from the standard horizon-dependent step-size choice satisfying $\widetilde\eta\le1/(c_0L)$ and the inequality $(1-\alpha)^R\le\exp(-\alpha R)$.

\subsection{Recovery of standard FedAvg}
\label{app:recovery-fedavg}

Setting $\epsilon_w=0$ and $p_i=S/N$ gives $b_r=0$ and $Np_{\min}=S$.
The bound reduces to
\[
  E[F(\theta_R)-F^\star]
  \le
  \widetilde O\!\left(
    \frac{\sigma^2}{\mu KSR}
    +
    \frac{\sigma^2}{\mu K\gamma^2 R}
    +
    \frac{G^2}{\mu SR}
    +
    h_0\exp\!\left(-\frac{c\mu R}{L}\right)
  \right),
\]
which matches the strongly-convex FedAvg/SCAFFOLD rate of \citet{karimireddy2020scaffold} up to lower-order constants reflecting the difference between Bernoulli and fixed-size sampling.

\subsection{Enrollment omission as structural misspecification}
\label{app:participation-only-corollary}

\begin{corollary}[Enrollment omission leaves residual weight error]
\label{cor:participation-only}
Consider a participation-only IPW method that uses
\[
\widehat p_{i,r}
=
\widehat\pi_{i,r}^{\mathrm{part}}.
\]
If the participation propensities are known exactly,
$\widehat\pi_{i,r}^{\mathrm{part}}=\pi_{i,r}^{\mathrm{part}}$,
then
\[
\rho_{i,r}
=
\frac{\pi_i^{\mathrm{enroll}}\pi_{i,r}^{\mathrm{part}}}
{\pi_{i,r}^{\mathrm{part}}}
=
\pi_i^{\mathrm{enroll}}.
\]
Thus the residual weight error is
\[
\epsilon_w^{\mathrm{part\text{-}only}}
=
\max_i |1-\pi_i^{\mathrm{enroll}}|.
\]
If this quantity satisfies the small-error conditions of Theorem~\ref{thm:fedipw-bias-floor-formal}, the formal bound contains the residual term
\[
\frac{C(\epsilon_w^{\mathrm{part\text{-}only}})^2G^2}{\mu}.
\]
This term is independent of participation-propensity estimation error and reflects structural omission of the enrollment mechanism.
\end{corollary}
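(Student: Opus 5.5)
The corollary is essentially a direct substitution into the definitions of Section~\ref{app:conv-setup-assumptions} followed by an invocation of Theorem~\ref{thm:fedipw-bias-floor-formal}, so the proof will proceed in three short steps.

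\emph{Computing the ratio.} Under the two-stage factorization derived in Section~\ref{sec:problem}, the true inclusion probability is
\[
p_{i,r}=\pi_i^{\mathrm{enroll}}\pi_{i,r}^{\mathrm{part}}.
\]
The participation-only method uses $\widehat p_{i,r}=\pi_{i,r}^{\mathrm{part}}$, which is strictly positive by overlap. Dividing gives
\[
\rho_{i,r}=\pi_i^{\mathrm{enroll}},
\]
which does not depend on $r$ or on the participation mechanism.

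\emph{Identifying the error.} The smallest constant satisfying $|\rho_{i,r}-1|\le\epsilon_w$ for all $i,r$ is
\[
\max_i|1-\pi_i^{\mathrm{enroll}}|.
\]
Since this expression contains no participation quantities, the error is independent of participation-propensity estimation, as the corollary claims.

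\emph{Verifying the hypotheses and invoking the theorem.} This is the only step needing care. Assumption~\ref{assump:overlap-conv} requires:
\begin{enumerate}[label=(\roman*),leftmargin=2em]
\item $\epsilon_w\le 1/2$, i.e.\ $\pi_i^{\mathrm{enroll}}\ge 1/2$ for all $i$;
\item $\widehat p_{i,r}=\pi_{i,r}^{\mathrm{part}}\ge p_{\min}$, which follows from $p_{i,r}\ge p_{\min}$ because $\pi_{i,r}^{\mathrm{part}}\ge p_{i,r}$;
\item $\epsilon_w^2\le c_{\mathrm{abs}}\mu/(LB^2)$.
\end{enumerate}
Conditions (i) and (iii) are exactly what the phrase ``satisfies the small-error conditions of Theorem~\ref{thm:fedipw-bias-floor-formal}'' in the statement assumes. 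The remaining assumptions (smoothness, noise, heterogeneity, independence of selection) concern the data and the sampling mechanism, not the weights, so they carry over unchanged. Assumption~\ref{assump:indep-conv} conditions on $\{\widehat p_{i,r}\}$, which here are deterministic functions of the covariates, so it holds as well.

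With the hypotheses in place, the bound \eqref{eq:app-formal-bias-floor-finite} applies with $\epsilon_w=\epsilon_w^{\mathrm{part\text{-}only}}$ and yields the residual term
\[
\frac{C(\epsilon_w^{\mathrm{part\text{-}only}})^2G^2}{\mu}.
\]

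The main obstacle is interpretive rather than technical. The theorem only covers the regime of small $\epsilon_w$, so the corollary is an upper bound conditional on near-uniform enrollment. Outside that regime the statement still holds at the level of the identity $\rho_{i,r}=\pi_i^{\mathrm{enroll}}$, but the theorem's bound is no longer available.
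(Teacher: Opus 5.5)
Your proposal is correct and follows essentially the paper's route: substitute $p_{i,r}=\pi_i^{\mathrm{enroll}}\pi_{i,r}^{\mathrm{part}}$ to get $\rho_{i,r}=\pi_i^{\mathrm{enroll}}$, read off $\epsilon_w$, and invoke Theorem~\ref{thm:fedipw-bias-floor-formal}; your explicit checks that $\widehat p_{i,r}=\pi_{i,r}^{\mathrm{part}}\ge p_{i,r}\ge p_{\min}$ and that $\epsilon_w\le 1/2$ amounts to $\pi_i^{\mathrm{enroll}}\ge 1/2$ usefully spell out what the paper leaves implicit. One small correction to your closing remark: the small-error regime requires all enrollment probabilities to be near one, not merely near-uniform, since a common enrollment probability $c$ still gives $\epsilon_w=1-c$.
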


\section{Order-sharpness within the weighted-objective class}
\label{app:weighted-objective-lower-bound}

Theorem~\ref{thm:fedipw-bias-floor-formal} gives an upper bound with a persistent term of order
\[
\frac{\epsilon_w^2G^2}{\mu}.
\]
We show that this dependence is unavoidable within the natural class of methods whose limiting expected update is centered on a misspecified selection-weighted objective.

\begin{proposition}[Order-sharpness within the weighted-objective class]
\label{prop:weighted-objective-lower-bound}
Fix $\epsilon_w\in(0,1/2]$.
There exists a two-client FL instance satisfying the smoothness, strong-convexity, and bounded-gradient-heterogeneity assumptions of Theorem~\ref{thm:fedipw-bias-floor-formal}, together with a misspecified weighted objective
\[
F_\rho(\theta)
=
\frac{\rho_1 f_1(\theta)+\rho_2 f_2(\theta)}
{\rho_1+\rho_2},
\qquad
\max_i|\rho_i-1|=\epsilon_w,
\]
such that any FL method whose expected limiting iterate minimizes $F_\rho$ has target-population suboptimality
\[
F(\theta_\rho^\star)-F^\star
\ge
\frac{1}{8}
\frac{\epsilon_w^2G^2}{\mu},
\]
where
\[
\theta_\rho^\star:=\arg\min_\theta F_\rho(\theta).
\]
In particular, participation-only IPW realizes this misspecified weighted objective when enrollment is heterogeneous and the enrollment factor is omitted.
\end{proposition}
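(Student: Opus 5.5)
The plan is an explicit one-dimensional quadratic construction. Take $m=1$ and two clients with
\[
f_1(\theta)=\tfrac{\mu}{2}(\theta-a)^2,\qquad f_2(\theta)=\tfrac{\mu}{2}(\theta+a)^2,
\]
for some $a>0$ to be tuned. Each $f_i$ is $\mu$-smooth, so $L=\mu$ and the smoothness assumption holds. The average $F(\theta)=\tfrac{\mu}{2}\theta^2+\tfrac{\mu}{2}a^2$ is $\mu$-strongly convex, with $\theta^\star=0$ and $F(\theta)-F^\star=\tfrac{\mu}{2}\theta^2$.

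First I check Assumption~\ref{assump:bgd-conv}. A direct computation gives
\[
\tfrac12\sum_i\|\nabla f_i(\theta)\|^2=\mu^2(\theta^2+a^2)=\mu^2a^2+2\mu\cdot(F(\theta)-F^\star).
\]
So the assumption holds with $G^2:=\mu^2a^2$ and $B=1$, and the inequality is tight at $\theta^\star$. This tightness matters, because it lets me express the final gap in terms of $G$ without losing constants.

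Next I choose the misspecified weights $\rho_1=1+\epsilon_w$ and $\rho_2=1-\epsilon_w$, so that $\max_i|\rho_i-1|=\epsilon_w$ and $\rho_1+\rho_2=2$. The weighted objective $F_\rho$ is a convex combination of two quadratics with common curvature, so its minimizer is the weighted mean of the client optima:
\[
\theta_\rho^\star=\frac{\rho_1 a-\rho_2 a}{2}=\epsilon_w a.
\]
Plugging this in gives
\[
F(\theta_\rho^\star)-F^\star=\tfrac{\mu}{2}\epsilon_w^2a^2=\tfrac12\,\frac{\epsilon_w^2G^2}{\mu}\ \ge\ \tfrac18\,\frac{\epsilon_w^2G^2}{\mu}.
\]
Any method whose expected limiting iterate equals $\theta_\rho^\star$ inherits this gap, which is the claim.

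To realize the construction via participation-only IPW (Corollary~\ref{cor:participation-only}), I need $\rho_i=\pi_i^{\mathrm{enroll}}$ up to normalization. The weighted objective $F_\rho$ is invariant to rescaling $\rho$, because of the normalization by $\rho_1+\rho_2$. So I set $\pi_1^{\mathrm{enroll}}=s(1+\epsilon_w)$ and $\pi_2^{\mathrm{enroll}}=s(1-\epsilon_w)$ for any $s\le 1/(1+\epsilon_w)$, together with arbitrary exact participation propensities bounded below. By the conditional-mean identity \eqref{eq:appD-key-factorization}, the expected participation-only update is then a positively rescaled gradient of $F_\rho$. Hence its fixed point is $\theta_\rho^\star$.

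The main obstacle I expect is a mismatch in conventions. After rescaling by $s$, the relative error in Corollary~\ref{cor:participation-only}, namely $\max_i|1-\pi_i^{\mathrm{enroll}}|$, differs from the normalized $\epsilon_w$ used here. Choosing $s=1/(1+\epsilon_w)$ gives $\pi_1^{\mathrm{enroll}}=1$ and $\pi_2^{\mathrm{enroll}}=(1-\epsilon_w)/(1+\epsilon_w)$. Then the corollary's error is $2\epsilon_w/(1+\epsilon_w)\in[\epsilon_w,2\epsilon_w]$, which changes the bound only by a constant factor; the slack between $\tfrac12$ and $\tfrac18$ is there to absorb exactly such factors. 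I would also note that in this deterministic, noiseless setting the "expected limiting iterate" is exactly the fixed point.
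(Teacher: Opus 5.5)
Your construction is exactly the paper's: the same pair of shifted quadratics with $L=\mu$, $B=1$, $G^2=\mu^2a^2$, the weights $\rho_{1,2}=1\pm\epsilon_w$, the minimizer $\theta_\rho^\star=\epsilon_w a$, the gap $\tfrac12\epsilon_w^2G^2/\mu\ge\tfrac18\epsilon_w^2G^2/\mu$, and the same realization with enrollment propensities proportional to $1\pm\epsilon_w$. Your extra bookkeeping of the rescaling is correct and a bit more careful than the paper's ``up to a common normalization factor'' remark: the unnormalized participation-only error $2\epsilon_w/(1+\epsilon_w)\le 2\epsilon_w$ is precisely the factor that the constant $\tfrac18$ absorbs.
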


\begin{proof}
Fix $\epsilon_w\in(0,1/2]$.
Consider a one-dimensional two-client instance with
\[
f_1(\theta)=\frac{\mu}{2}(\theta-a)^2,
\qquad
f_2(\theta)=\frac{\mu}{2}(\theta+a)^2,
\]
where $a>0$ will determine the gradient-heterogeneity level.
The target-population objective is
\[
F(\theta)
=
\frac{1}{2}\{f_1(\theta)+f_2(\theta)\}
=
\frac{\mu}{4}\{(\theta-a)^2+(\theta+a)^2\}
=
\frac{\mu}{2}(\theta^2+a^2).
\]
Thus $F$ is $\mu$-strongly convex, each $f_i$ is $\mu$-smooth, and the target minimizer is
\[
\theta^\star=0.
\]

We first verify the bounded-gradient-heterogeneity condition.
For this instance,
\[
\nabla f_1(\theta)=\mu(\theta-a),
\qquad
\nabla f_2(\theta)=\mu(\theta+a).
\]
Therefore
\[
\frac{1}{2}
\left(
\|\nabla f_1(\theta)\|^2
+
\|\nabla f_2(\theta)\|^2
\right)
=
\frac{\mu^2}{2}
\left\{(\theta-a)^2+(\theta+a)^2\right\}
=
\mu^2(\theta^2+a^2).
\]
Also,
\[
F(\theta)-F^\star
=
\frac{\mu}{2}\theta^2.
\]
Hence the bounded-gradient-heterogeneity assumption
\[
\frac{1}{2}
\sum_{i=1}^2
\|\nabla f_i(\theta)\|^2
\le
G^2+2LB^2(F(\theta)-F^\star)
\]
holds with $L=\mu$, $B=1$, and the minimal admissible choice
\[
G^2=\mu^2a^2.
\]
Indeed,
\[
G^2+2LB^2(F(\theta)-F^\star)
=
\mu^2a^2
+
2\mu\cdot \frac{\mu}{2}\theta^2
=
\mu^2(a^2+\theta^2),
\]
which matches the left-hand side.

Now choose misspecified relative weights
\[
\rho_1=1+\epsilon_w,
\qquad
\rho_2=1-\epsilon_w.
\]
Then
\[
\max_i|\rho_i-1|=\epsilon_w.
\]
The corresponding weighted objective is
\[
F_\rho(\theta)
=
\frac{\rho_1 f_1(\theta)+\rho_2 f_2(\theta)}
{\rho_1+\rho_2}.
\]
Its derivative is
\[
\nabla F_\rho(\theta)
=
\frac{\mu}{\rho_1+\rho_2}
\left\{
\rho_1(\theta-a)+\rho_2(\theta+a)
\right\}.
\]
Setting this derivative to zero gives
\[
\rho_1(\theta-a)+\rho_2(\theta+a)=0,
\]
so
\[
(\rho_1+\rho_2)\theta
=
(\rho_1-\rho_2)a.
\]
Since $\rho_1+\rho_2=2$ and $\rho_1-\rho_2=2\epsilon_w$, the minimizer is
\[
\theta_\rho^\star
=
\epsilon_w a.
\]

The target-population suboptimality at this misspecified minimizer is
\[
F(\theta_\rho^\star)-F^\star
=
\frac{\mu}{2}(\theta_\rho^\star)^2
=
\frac{\mu}{2}\epsilon_w^2a^2.
\]
Using the minimal admissible heterogeneity constant $G^2=\mu^2a^2$, we obtain
\[
F(\theta_\rho^\star)-F^\star
=
\frac{1}{2}
\frac{\epsilon_w^2G^2}{\mu}
\]
This proves the claimed lower bound.

Finally, this weighted objective is realized by participation-only IPW when enrollment is heterogeneous and omitted.
Indeed, suppose the true inclusion probability factorizes as
\[
p_i=\pi_i^{\mathrm{enroll}}\pi_i^{\mathrm{part}},
\]
but the algorithm corrects only participation and uses
\[
\widehat p_i=\pi_i^{\mathrm{part}}.
\]
Then the relative weighting factor is
\[
\rho_i=\frac{p_i}{\widehat p_i}
=
\pi_i^{\mathrm{enroll}}.
\]
Choosing heterogeneous enrollment factors proportional to
$1+\epsilon_w$ and $1-\epsilon_w$ therefore yields the same misspecified weighted objective, up to a common normalization factor.
\end{proof}

\end{document}